\newtheorem{theorem}{Theorem}[]
\title{PCS-UQ: Uncertainty Quantification via the Predictability-Computability-Stability Framework}
\author[a,1]{Abhineet Agarwal}
\author[a,1]{Fange Xiao}
\author[c,1]{Rebecca Barter}
\author[a]{Omer Ronen}
\author[a]{Boyu Fan}
\author[a,b,2]{Bin Yu}
\affil[a]{Department of Statistics, University of California, Berkeley}
\affil[b]{Department of Electrical Engineering and Computer Science, University of California, Berkeley}
\affil[c]{Department of Epidemiology, University of Utah}
\keywords{Statistical Inference $|$ p-values $|$ Confidence Intervals $|$ Conformal Inference}
\begin{document}
\begin{abstract}
As machine learning (ML) enters high-stakes domains, trustworthy uncertainty quantification (UQ) is essential for safety. 
In this paper we introduce PCS-UQ, a framework based on the Predictability, Computability, and Stability (PCS) principles for veridical data science. 
Starting with a candidate set of models or algorithms, PCS-UQ integrates a rigorous prediction-check to screen out unsuitable models in the set and utilizes bootstrap samples in order to capture both inter-sample variability and algorithmic instability for the prediction-checked algorithms. 
We then introduce a novel multiplicative calibration scheme to enhance local adaptivity, which can be viewed as a new score in conformal prediction. 
Moreover, we produce a compilation of 17 real-world regression datasets with manually constructed subgroups. 
On this benchmark, PCS-UQ maintains the target coverage while outperforming or matching conformal methods equipped with oracle-selected algorithms in interval width. 
PCS-UQ achieves consistent subgroup coverage, outperforming these oracle-selected conformal methods. 
Notably, PCS-UQ stands out in achieving both competitive interval widths and consistent subgroup coverage.
Across 6 classification datasets, PCS-UQ reduces prediction set sizes by 20\%. 
To scale the framework for deep learning, we propose computationally efficient variants that bypass expensive retraining. 
On three computer vision benchmarks, these variants reduce prediction set sizes by 20\% over conformal baselines. 
Finally, we provide a theoretical proof that a modified PCS-UQ algorithm preserves valid coverage under exchangeability as a form of split conformal inference.
\end{abstract}
 
\maketitle
\ifthenelse{\boolean{shortarticle}}{\ifthenelse{\boolean{singlecolumn}}{\abscontentformatted}{\abscontent}}{}

\section{Introduction}
\label{sec:intro}
Recent decades have seen tremendous growth in machine learning (ML) and artificial intelligence (AI). 
As these systems increasingly inform high-stakes decisions, ensuring their reliability and safety has become a central concern.
Failures in trust and reproducibility---exemplified by the replication crisis in biomedical research \cite{ioannidis2005most,begley2012raise,open2015estimating}---highlight the risks of reaching conclusions based on questionable models.
A key component of establishing confidence and enabling responsible decision-making from data and models is trustworthy uncertainty quantification (UQ). 
Accurate estimates of uncertainty allow practitioners to reach reliable data-driven conclusions and accurately assess risk to mitigate downstream consequences.
Indeed, researchers believe that poor estimates of uncertainty were a significant cause of the biomedical replication crisis \cite{ioannidis2005most}.

Standard approaches to UQ are based on a traditional statistical modeling framework pioneered by R.A. Fisher and others a century ago.
This framework relies on specifying a probabilistic generative (i.e., ``true'') model whose parameters we estimate via observed data.
While this framework provides tractable mathematical models to analyze \cite{cox2006principles,reid2015}, it was not designed for the complexities of modern ML models and datasets. 
For example, large language models (LLMs) \cite{vaswani2017attention, radford2018improving} consist of hundreds of billions of parameters and are trained on web-scale datasets that consist of various modalities, e.g., tables, text, and images. 
In these settings, simple generative models and the assumptions required for valid statistical inference are unlikely to hold. 
These limitations call for new tools to quantify uncertainty to support responsible, data-driven decision-making.

To move beyond reliance on correctly specified generative models, statisticians have developed conformal inference \cite{vovk2005algorithmic,shafer2007tutorialconformalprediction,lei2018distribution}.
Assuming exchangeable data, conformal inference is a distribution-free framework that produces valid prediction sets. 
Conformal inference has been the subject of intense study over the past decade, leading to many impressive theoretical results and practical extensions \cite{angelopoulos2021gentle}.
One highlight of the framework is its validity guarantee, which holds for any predictive model, regardless of quality.
However, an inaccurate model can produce unnecessarily large and often unstable prediction sets.
In practice, the task of model checking is left to practitioners and often overlooked.
Additionally, because the validity guarantee is marginal, conformal methods can also exhibit coverage shortfalls on subgroups (examined in \cref{sec:reg_exp}) --- a key concern for practitioners.
Lastly, despite the breadth of theoretical work, empirical comparisons of conformal methods remain limited, leaving open the question of how these practical challenges play out across varying data conditions.

While conformal inference has substantially advanced distribution-free prediction, robust data analysis will ultimately require a broader view of UQ.
It will require considering uncertainty in every stage of the data science life cycle (DSLC), from problem formulation and data collection to exploratory analyses, data cleaning, modeling, interpretation, and even visualization \cite{yu2020veridical}.  
At each stage, researchers face choices --- data cleaning methods, model or algorithm choices, hyperparameter tuning, and more --- that can have a large and often unacknowledged effect on analyses, results, and conclusions.
For example, Breznau et al. \cite{breznau2022observing} showed that even when given the same dataset and the same domain problem, different teams of social scientists made choices that led to opposite conclusions. 
More broadly, the vast number of choices available to researchers throughout the DSLC \footnote{Gelman and Loken \cite{gelman2013garden} refer to this as the ``garden of forking paths''} creates a hidden universe of uncertainty that is often ignored \cite{simmons2011false, gelman2013garden}.

To address this challenge, Yu and Kumbier \cite{yu2020veridical} proposed the Predictability-Computability-Stability (PCS) framework for veridical data science, which recognizes that data-driven conclusions are the result of multiple steps and human judgment calls. 
The PCS framework provides a philosophical and practically structured approach to guide these choices by unifying, streamlining, and expanding on the ideas and best practices of statistics and machine learning:
First, the PCS framework requires that the ML models used are \textit{predictive} (i.e., prediction-checked or screened) for each stage of a DSLC.
Second, PCS formally considers computation both in terms of time/memory complexity and the use of data-inspired simulations to further augment data analyses \cite{elliott2025designing}.
Third, the stability principle \cite{yu2013stability} considers reasonable perturbations and choices made in the DSLC; it both assesses instability of conclusions relative to these perturbations and appropriately aggregates prediction-checked (or reality-checked) models (steps more broadly) for better performance.
Moreover, PCS requires meticulous documentation of the reasonable choices made in a DSLC or an AI workflow.

PCS has empirically been shown to be effective across a range of challenging domain problems from developmental biology \cite{wu2016stability}, genomics \cite{basu2018iterative}, stress-testing clinical decision rules \cite{basu2018iterative}, subgroup discovery in causal inference \cite{dwivedi2020stable}, and cardiology \cite{wang2023epistasis}.
More recently, Yu and Barter detail the PCS framework in their textbook on veridical data science \cite{yu2024veridical}.
In \cite{yu2024veridical}, the ``P'' in PCS is extended to stand for \emph{reality check} in every step of a DSLC; the book also extends the framework beyond supervised learning to unsupervised learning (see also \cite{rewolinski2025pcsworkflow} for a further extension to building trustworthy AI workflows). 
That is, PCS uses (broadly interpreted) predictability as a proxy for reality check to ensure that every step of a DSLC captures reality.
Yu and Barter turn to the specific problem of UQ in chapter 13, proposing an initial PCS-driven UQ (PCS-UQ) method for regression built from first PCS principles (and including uncertainty considerations from both data cleaning and model choices) and empirical evidence.
Although developed independently from conformal inference, aspects of PCS-UQ, such as using bootstrap resampling for stability and adopting novel calibration strategies, converge with ideas that have recently gained traction in conformal literature \cite{kim2020predictive, qiu2023prediction}.
This convergence lends support to the robustness of these methodological choices.
The \textbf{goal} of this paper is to build on prior work and further develop a \emph{PCS-driven} UQ method that (1) maintains validity guarantees and (2) addresses additional objectives (e.g., model checking, subgroup coverage) beyond those of existing UQ methods.
To evaluate PCS-UQ and conformal inference methods, we gathered 17 benchmark datasets from public sources.
We focus on uncertainty arising from inter-sample variability and place explicit emphasis on rigorous model checking for both PCS-UQ and conformal methods, and leave the incorporation of data-cleaning choices and other human judgment calls to future work.
Our contributions are as follows.

\paragraph{PCS-UQ for Regression} As previously alluded to, we build upon a recently proposed PCS-UQ method for regression in chapter 13 of \cite{yu2024veridical}. 
For simplicity, we provide a summary of the original method here, and describe the procedure used in the paper in \cref{sec:PCS_reg}.
We are given a set of candidate algorithms or models. (1) We split data into training and validation sets, and fit the given candidate prediction algorithms on the training set. In accordance with the ``P'' principle, we drop candidate algorithms that perform poorly on the validation set. 
(2) Next, following the stability principle, we fit the filtered or prediction-checked set of algorithms on multiple bootstrapped training datasets. 
These discrete sets of bootstraps create a \emph{pseudo-population} that allows us to assess finite-sample uncertainty. 
(3) Lastly, we perform a multiplicative calibration that extends interval lengths to achieve the desired coverage (see \cref{supp:PCS_ch13} for details).
We perform extensive experiments across 17 real-world regression datasets. 
Results show that PCS-UQ achieves the desired marginal coverage while reducing or matching the length of intervals relative to leading ``oracle-selected'' \footnote{As seen later in the paper, we select conformal methods with the best average test-set performance across datasets.} conformal approaches.
Further, we show that our multiplicative calibration approach allows PCS-UQ to achieve the desired coverage across subgroups, whereas conformal inference approaches do not consistently do so.
As we discuss later, multiplicative calibration can be regarded as a novel conformal ``score'' function; this directly connects PCS-UQ to conformal prediction.

\paragraph{Real-World Dataset Compilation} To our knowledge, existing empirical comparisons of conformal methods have been conducted on up to 5 real-world datasets. 
We produce a compilation of 17 real-world datasets from public sources for regression tasks.
Beyond aggregate performance, we construct natural subgroups for each regression dataset based on natural breaks in a feature's distribution. 
This allows a form of local analysis that is critical for practitioners but rarely reported in the conformal literature.

\paragraph{PCS-UQ for Classification} We extend PCS-UQ from regression to multi-class classification. 
Experiments across $6$ multi-class datasets show improvement upon ``oracle-selected'' conformal approaches by over $20\%$ in prediction set size.

\paragraph{Approximation Methods for Deep-Learning} PCS-UQ for regression and classification requires fitting multiple models across bootstraps, which can be prohibitively expensive for large deep-learning (DL) models. 
We propose two approximation methods to avoid fitting multiple DL models.
Specifically, we only train one model and either apply dropout on activations \cite{gal2016dropout} or add Gaussian noise to the weights \cite{zhou2024knowgraph, gan2026neural} to create multiple perturbed models. 
Experiments across three computer vision benchmarks show that our approximation schemes maintain the computational efficiency of conformal inference while achieving valid coverage and reducing prediction set sizes by $20\%$.

\paragraph{Connection to Conformal Inference} We show that the multiplicative calibration approach in PCS-UQ can be regarded as a novel conformal ``score'' function. 
Under an exchangeability assumption and a slightly modified PCS-UQ algorithm, we show that this modified approach achieves the desired coverage, providing a formal bridge between PCS-UQ and split conformal.
\section{Related Work}
\label{sec:related_work}

\paragraph{Classical Parametric Inference} As discussed, classical statistical approaches consider uncertainty under a fixed generative, often linear model \cite{cox2006principles, reid2015}. 
Typically, these approaches focus on deriving analytic distributions of parameter estimators \cite{belloni2012sparse, buhlmann2013, zhang2014confidence, vandeGeer2014,javanmard2014confidenceintervalshypothesistesting}.
Another significant line of work is post-selection inference, which focuses on statistical inference in the best linear approximation of an underlying regression function \cite{fithian2014optimal,tibshirani2016exact,lee2016exact,tian2017asymptotics}. 
These methods, while influential, are not the focus of our work since they specify an underlying generative model and focus on theoretically studying the confidence intervals for parameters. 
In contrast, our work aims to empirically construct and evaluate trustworthy \emph{prediction intervals} without assuming such a model.

\paragraph{Resampling} Resampling to assess uncertainty has been widely studied in statistics. 
Prominent among resampling methods are the bootstrap \cite{efron1992bootstrap,stine1985bootstrap}, subsampling \cite{politis1994large, bickel1997resampling}, and the jackknife \cite{quenouille1949approximate,quinlan1986induction}.
The bootstrap is a key component of PCS-UQ since we use it to assess finite-sample variability for our screened models.  %
There have also been a number of related papers that use leave-one-out approaches for constructing prediction intervals  \cite{stone1974cross, butler1980predictive}. 
These methods typically do not address model checking or perform model screening, and require re-fitting the model for every training sample, which renders them infeasible for modern ML models. 
See Efron and Gong \cite{efron1983leisurely} for a comprehensive overview of different approaches. 
There has also been a line of work to quantify uncertainty of ensemble methods based on resampling methods such as bagging and bootstrapping, e.g., Random Forests\cite{mentch2016quantifying, wager2014confidence}.

\paragraph{Conformal Inference for Regression} Proposed by Vovk \cite{vovk2005algorithmic, shafer2007tutorialconformalprediction}, conformal prediction for regression has been a major focus of theoretical study. 
If the underlying data are exchangeable, conformal methods achieve target coverage. 
Split conformal prediction \cite{papadopoulos2002inductive,lei2018distribution} is the 
most widely used form of conformal inference, and is based on a simple and effective idea. 
First, split the data into two halves, using one half for fitting a model, and the other to calibrate prediction intervals to achieve the desired coverage. 
Recent work \cite{barber2021predictive,kim2020predictive} has also combined resampling techniques such as the jackknife and bootstrap with conformal inference to reduce interval lengths. 
The works discussed above achieve desired coverage on average, but there are no guarantees for local coverage, i.e., conditional on covariates or for subgroups.
As a result, different methods such as Studentized conformal inference \cite{lei2018distribution} and kernel-weighted conformal methods have been proposed to improve local coverage \cite{guan2020conformalpredictionlocalization}.
Other extensions include techniques to tackle covariate shift \cite{tibshirani2019conformal}, time-series \cite{stankeviciute2021conformal,angelopoulos2023conformal}, and treatment effect estimation in causal inference \cite{lei2021conformal}. 
Since the conformal literature is too broad to cover comprehensively, we refer readers to \cite{shafer2007tutorialconformalprediction,angelopoulos2021gentle} for a detailed overview.  

\paragraph{Conformal Classification} Romano et al. \cite{romano2020classification} proposed a new conformal score function for categorical and ordinal responses.
Specifically, they propose an approach called adaptive prediction sets (APS) which is based on a cumulative likelihood score. 
For a given sample, APS creates prediction sets by greedily adding classes in order of the predicted probability until the cumulative score of the set reaches a threshold. 
This threshold is calibrated to achieve the desired coverage. 
Angelopoulos et al. define a regularized version of APS called RAPS that has been shown to improve set size in practice \cite{angelopoulos2022uncertaintysetsimageclassifiers}.

\section{PCS Regression Prediction Intervals}
\label{sec:PCS_reg}
We detail the PCS-UQ procedure for generating prediction intervals in the regression setting. 
Our method is closely related to and builds on the procedure proposed in chapter 13 of \cite{yu2024veridical}; see \cref{supp:PCS_ch13} for an overview of this method.  
Extensions to multi-class classification are discussed in \cref{sec:pcs_mc}.
This paper does not focus on uncertainty generated by data cleaning choices, and instead focuses on uncertainty resulting from label noise and finite samples. 
Before detailing our algorithm, we establish necessary notation. 

\paragraph{Notation} We work in the typical supervised regression setting with data $\mathcal{D} = \{(\mathbf{X}_i,Y_i)\}^n_{i=1}$, where $\mathbf{X}_i \in \mathbb{R}^d$, and $Y_i \in \mathbb{R}$. 
For $\alpha \in (0,1)$, the goal is to produce intervals that achieve $1-\alpha$ coverage. 
That is, we aim to produce prediction intervals that contain the true response for $1 - \alpha$ proportion of future data points.
Let $f_1 \ldots f_{M}$ denote candidate predictive algorithms, e.g., ordinary least squares (OLS), Random Forests (RFs), etc. 
Finally, let $l$ denote a loss, e.g., mean-squared error. 

\paragraph{Step 1: Data-Splitting and Prediction-Check}
Randomly split $\mathcal{D}$ into a training set $\mathcal{D}_{\text{tr}}$, and a validation set $\mathcal{D}_{\text{val}}$. Train each algorithm on the training set to obtain fitted models $\hat{f}_1(\cdot; \mathcal{D}_{\text{tr}}), \ldots, \hat{f}_M(\cdot; \mathcal{D}_{\text{tr}})$. Choose the top-$k$ performing algorithms according to loss $l$ \footnote{Chapter 13 of \cite{yu2024veridical} describes other data-driven ways to perform model screening.}.  Without loss of generality, let $f_1 \ldots f_k$ denote the top-$k$ performing algorithms. 
The number of algorithms to include, $k$, serves as a hyperparameter in PCS-UQ; we discuss data-driven choices for $k$ later. 

\paragraph{Step 2: Bootstrapping} Bootstrap the \emph{entire} dataset $B$ times to obtain bootstrapped samples $\mathcal{D}^{(1)} \ldots \mathcal{D}^{(B)}$. Train all algorithms chosen in the previous step on every bootstrapped dataset $\mathcal{D}^{(b)}$ to obtain bootstrapped models $\{\hat{f}_j(\cdot; \mathcal{D}^{(b)}), j \in [k], b \in [B]\}$. 
For each $(\mathbf{X}_i, Y_i) \in \mathcal{D}$, let $T_i \subseteq [B]$ be the set of bootstrap indices such that $(\mathbf{X}_i,Y_i) \notin \mathcal{D}^{(b)}$ for all $b \in T_i$ \footnote{We use out-of-bag (OOB) samples to replace a fixed validation set, which is used in the PCS-UQ method introduced in Chapter 13 of \cite{yu2024veridical}.}.

\paragraph{Step 3: Calibration} First, for each $(\mathbf{X}_i, Y_i)$, form a prediction set $\mathcal{P}_i = \{\hat{f}_j(\mathbf{X}_i; \mathcal{D}^{(b)}); j \in [k], b \in T_i\}$.
Then, form an uncalibrated interval $[q_{\alpha/2}(\mathcal{P}_i), q_{1 - \alpha/2}(\mathcal{P}_i)]$, where $q_{\beta}(S)$ is the $\beta$ quantile for a set $S$. 
For a multiplicative scaling factor $\gamma$, generate a scaled interval
\begin{equation*}
\begin{aligned}
    \mathcal{I}_i (\gamma)  = &\Big[q_{0.5} ({\cal P}_i) - \gamma \times 
    \big(q_{0.5} ({\cal P}_i) - q_{\alpha/2} ({\cal P}_i)\big), \\
    &\quad q_{0.5} ({\cal P}_i) + \gamma \times 
    \big(q_{1-\alpha/2} ({\cal P}_i) - q_{0.5} ({\cal P}_i)\big) \Big].
\end{aligned}
\end{equation*}
We choose the scaling factor $\hat{\gamma}$ such that $ \frac{1}{n} \sum_i I_{\{ Y_i \in \mathcal{I}_i (\hat{\gamma}) \}} \geq 1- \alpha$, i.e., we achieve $1-\alpha$ coverage on the data $\mathcal{D}$.

\paragraph{Step 4: Generating PCS Prediction Interval for Test-Point} For a new test point $\mathbf{X}$, let $\mathcal{P} = \{\hat{f}_j(\mathbf{X}; \mathcal{D}^{(b)}); j \in [k], b \in [B]\}$. Then, we produce prediction interval 
\begin{equation}
\label{eq:pcs_reg_interval}
\begin{aligned}
    \mathcal{I} = &\Big[q_{0.5} ({\cal P}) - \gamma \times 
    \big(q_{0.5} ({\cal P}) - q_{\alpha/2} ({\cal P})\big), \\
    &\quad q_{0.5} ({\cal P}) + \gamma \times 
    \big(q_{1-\alpha/2} ({\cal P}) - q_{0.5} ({\cal P})\big) \Big].
\end{aligned}
\end{equation}

The PCS-UQ algorithm consists of four key steps that contribute to its strong performance; see \cref{sec:reg_exp}.
 We discuss the motivation behind these design choices and how they compare to common conformal methods.

\begin{enumerate}[leftmargin=*]
    \item \textbf{Prediction-check.} PCS incorporates an explicit model-checking step that screens out algorithms with poor prediction performance.
    This is done in order to ensure that uncertainty is assessed using algorithms that sufficiently capture the underlying data-generating process. 
    Experiments in \cref{sec:ablation} show that excluding models with poor predictive performance leads to significantly smaller intervals.   
    In general, conformal methods do not explicitly include prediction-check.
    However, in our experiment, we perform a model-checking or ``oracle-selection'' step on conformal by presenting results using particular algorithms with the best average test set performance (see \cref{sec:reg_exp}).
    
    \item \textbf{Assessing local uncertainty via bootstraps} To assess uncertainty from finite samples, PCS simulates the data-collection process by constructing a set of perturbed datasets via the bootstrap. 
    This universe of discrete datasets creates a \emph{pseudo-population} that allows us to quantify \emph{local} uncertainty.
    Specifically, by evaluating the ensemble of bootstrap models (after prediction-check) at a given data point $X$, we construct an empirical conditional distribution of the predictions. 
    The quantiles of this distribution characterize the local spread of the models at $X$, providing a data-driven measure of uncertainty for the specific sample.
    On the other hand, split-conformal methods only utilize one random train-calibration split, which, while computationally efficient, can introduce variability across different splits.
    Bootstrap-based conformal methods \cite{kim2020predictive} mitigate this by leveraging multiple resampled datasets (without model checking or prediction-check);
    however, they aggregate bootstrap predictions into a point estimate rather than preserving the full predictive distribution, limiting their ability to characterize local uncertainty.
    Lastly, we note that using the bootstrap does impose a higher computational cost, especially on large datasets. We conduct a detailed analysis and discuss methods to reduce computation time in \cref{supp:reg_results}.
    
    \item \textbf{Data-efficiency via out-of-bag samples} Traditional split-conformal methods and the PCS method proposed in \cite{yu2024veridical} use a data split that leaves less data available for both fitting models and calibration. 
    In this paper, we use OOB samples to use the data efficiently as described in Step 2 above. 
    Results in \cref{supp:reg_comparison_pcs} show that using OOB samples reduces interval length by $\approx 5\%$ on average.  
    
    \item \textbf{Multiplicative calibration} Instead of additive calibration (i.e., expanding intervals by a fixed constant) as is common in conformal inference, we calibrate multiplicatively. 
    Since additive calibration expands intervals for every sample by a fixed length, it does not adjust the interval according to how uncertain the model's prediction is for that sample. 
    While the Studentized conformal method addresses this through explicit residual modeling, PCS-UQ achieves local adaptivity through a multiplicative scaling factor that naturally widens intervals for samples with high uncertainty.
    Experiments in \cref{sec:ablation} show that replacing multiplicative with additive calibration in the PCS procedure leads to poorer subgroup coverage across datasets. 
\end{enumerate}

\paragraph{Hyperparameter choices} PCS-UQ depends on two hyperparameters, $k$ and $B$.
These hyperparameters are selected using synthetic simulations and 5 pilot datasets.
To avoid contamination, we do not include these datasets in our 17-dataset benchmark.
For $k$, we choose the number of algorithms that leads to the smallest width while maintaining the desired coverage in synthetic simulations and 5 pilot datasets.
This results in our choice of $k=1$.
Note that we set a fixed number of algorithms for simplicity.
In practice, the specific configuration of prediction-check should be decided by context and domain knowledge as suggested in Ch. 13 of \cite{yu2024veridical}.
We choose $B=1000$ to be as large as computationally feasible.

\section{Regression Experiments}
\label{sec:reg_exp}

\subsection{Experimental Setup}
\label{subsec:reg_exp_setup}

This section details the experimental set-up for our regression experiments displayed below. 

\paragraph{Datasets} We use $17$ regression datasets commonly found in tabular benchmarks \cite{matthias2021openml}. 
These datasets reflect a range of sample sizes and dimensions. 
To avoid uncertainty associated with data cleaning, our datasets do not contain any missing values.  
We use $80\%$ to train and fit various UQ methods, and reserve $20\%$ for the test set.
For methods requiring a further split of the training set (for training algorithms and calibration), we use an even split of the training set, following \cite{lei2018distribution}.
Results using a 75/25 split are similar and can be found in \cref{supp:reg_different_split}.

\begin{table}[htbp]
    \centering
    \begin{tabular}{clrr}
    \toprule
    & Name &  Samples &  Features\\
    \midrule
        & Energy \cite{tsanas2012energy} &    768 &         10 \\
        & Concrete \cite{yeh1998concrete} &      1030 &        8 \\
        & Insurance \cite{ali2020pycaret} & 1338 & 8 \\
        & Airfoil \cite{brooks1989airfoil} &      1503 &         5 \\
        & Debutanizer \cite{fortuna2007soft} &     2394  &         7 \\
        & QSAR \cite{olier2015qsar} &      5742 &         500 \\
        & Parkinsons \cite{tsana2009parkinsons} &      5875 &        18 \\
        & Kin8nm \cite{torgo2014kinematics} &      8192 &         8 \\
        & Computer \cite{torgo2014computer} &     8192 &        21 \\
        & Powerplant \cite{tfekci2014combined} &      9568 &         4 \\
        & Naval \cite{coraddu2015condition} &    11934 &         24 \\
        & Miami \cite{bourassa2021miami} &     13932 &         28 \\
        & Elevator \cite{torgo2014elevators} &      16599 &        18 \\
        & CA Housing \cite{kelley1997sparse} &      20640 &         8 \\
        & Superconductor \cite{hamidieh2018superconductivity} &      21263 &         79 \\
        & Protein \cite{rana2013physicochemical} &    45730   &         9 \\
        & Diamond  \cite{matthias2021openml} &      53940 &        23 \\    
     \bottomrule
     \end{tabular}
    \caption{Datasets used for regression experiments.}
    \label{tab:datasets}
\end{table}

\paragraph{Baseline Conformal Methods} We compare PCS against three popular conformal regression methods: split conformal regression \cite{harris2002inductive, lei2018distribution}, Studentized conformal regression \cite{harris2002inductive, lei2018distribution}, and jackknife+-after-bootstrap (J+aB) \cite{kim2020predictive} \footnote{We previously compared against the Majority Vote procedure from \cite{gasparin2024merginguncertaintysetsmajority} but removed the result for simplicity. Archived results can be found on our GitHub \url{https://github.com/aagarwal1996/PCS_UQ}.}.
For each run on a dataset and an ML method, split and Studentized conformal each trains a single model, while J+aB trains $B$ bootstrapped models.
We use the following eight candidate ML models: Ordinary Least Squares (OLS), Ridge regression \cite{hoerl1970ridge}, Lasso \cite{tibshirani1996regression}, Elastic Net \cite{zou2005regularization}, Random Forests \cite{breiman2001random}, AdaBoost \cite{freund1997decision}, XGBoost \cite{chen2016xgboost}, and a one-hidden-layer multilayer perceptron (MLP). 
We choose regularization parameters in Ridge, Lasso, and Elastic Net via three-fold cross-validation.
For other ML models, we use the default hyperparameters from \texttt{scikit-learn} \cite{scikit-learn}.
Additionally, we create a bagged ensemble with the top three of the ML models selected via a small (10\% of the training set) validation set. 

\paragraph{Metrics} We measure coverage and width of intervals on the test set.
We aim for $90\%$ coverage, i.e., we set $\alpha = 0.1$.
Interval width is normalized by the range of the responses on the test set. 
Results are averaged across 10 train-test splits. 

\subsection{Results}  
\label{subsec:regression_results}

This section reports the empirical results for our experiments described in the previous section. 

\paragraph{A note on ``oracle-selection'' for baselines} Due to the large number of conformal methods we ran, we only report results for the split conformal trained with XGBoost, Studentized conformal with the bagged ensemble, and J+aB with XGBoost \footnote{Results for all models can be found on our GitHub.}.
These ML methods are chosen because they achieve the desired coverage and have the smallest average width across the test sets of the 17 datasets.
We emphasize that these choices require oracle knowledge of test-set performance, which would not be available to a practitioner;
PCS-UQ's prediction-check is designed to simulate this oracle in practice.
Therefore, although model checking is not explicitly considered in split conformal, Studentized conformal, and J+aB, we impose a ``global'' (i.e. averaged-across-all-datasets) oracle-selection on these methods in our comparison results (instead of comparing with conformal approaches for each of the candidate algorithms).

\paragraph{All methods achieve desired marginal coverage} Test-set coverage is reported for all methods and datasets in \cref{tab:coverage_regression}. All conformal regression methods and PCS achieve the target $90\%$ coverage for every dataset. 

\paragraph{PCS produces matching or smaller marginal intervals than oracle-selected conformal baselines} 

\cref{fig:regression_width_comparison} displays the average interval width across the 17 datasets, along with the distribution of per-dataset percentage reductions.
PCS produces intervals that are 10\% - 20\% shorter on average than those of globally oracle-selected split and Studentized conformal, and are 5\% shorter on average than oracle-selected J+aB conformal.

The comparison with J+aB merits closer examination, since it is the baseline most structurally similar to PCS-UQ: both use the bootstrap for stability, and both, as presented here, operate with top-performing predictive algorithms (with PCS-UQ's top choice being adaptive to each dataset).
The essential methodological difference is that J+aB uses a constant additive offset for calibration, while PCS-UQ uses multiplicative scaling. 
This difference is small in marginal terms (5\% shorter for PCS-UQ) but, as the next result shows, consequential for subgroup coverage.

\begin{figure}[htbp]
    \centering
    \includegraphics[width=0.48\textwidth]{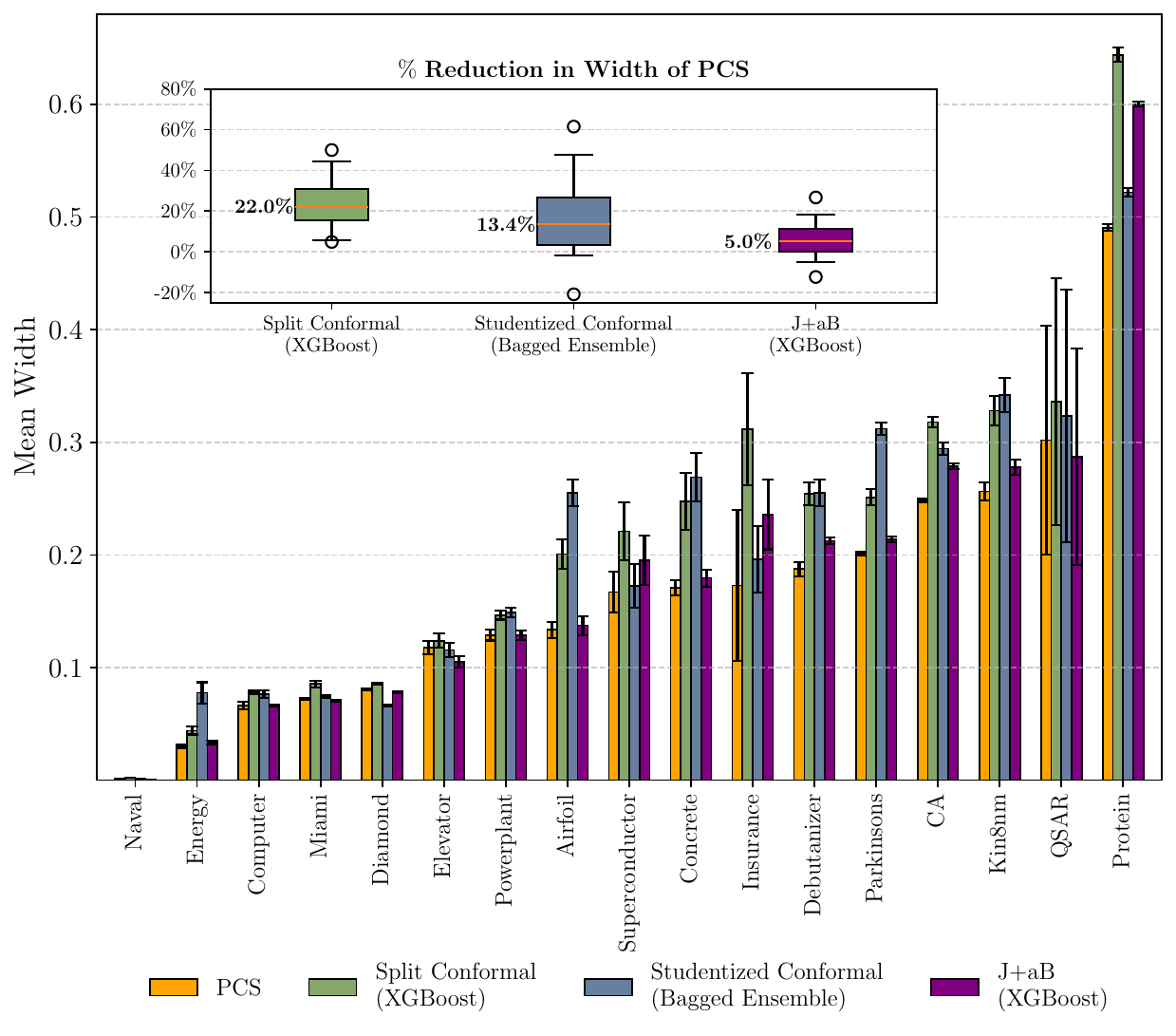}
    \caption{Comparison of PCS against three best-performing conformal methods: Split conformal (XGBoost), Studentized conformal (Bagged Ensemble), J+aB (XGBoost) across 17 datasets. We display the distribution of $\%$ improvement of PCS in the inset plot. PCS-UQ displays consistently better performance in interval width than conformal approaches in 15 out of 17 datasets (and comparable performance in the other two datasets). }
    \label{fig:regression_width_comparison}
\end{figure}

\paragraph{PCS adapts to subgroup structure} 
\label{subsec:subgroup_resuls}
Practitioners are often interested in whether intervals remain valid on heterogeneous subgroups, not only on average. 
For each dataset in \cref{tab:datasets}, we construct natural subgroups (details in \cref{supp:subgroups}) and evaluate coverage and width per subgroup.
Importantly, PCS-UQ has no knowledge of the subgroup definitions during training or calibration, so any subgroup adaptivity it exhibits is not the result of tuning.

\cref{fig:regression_subgroup_cov_width} shows the distribution of average subgroup coverage and width across datasets.
PCS-UQ consistently meets the 90\% target across subgroups while maintaining small average subgroup width. 
J+aB — marginally the strongest baseline — under-covers in some subgroups.
Studentized conformal, which models local residuals explicitly, maintains subgroup coverage but at the cost of wider marginal intervals than PCS-UQ. 
Split conformal under-covers on several subgroups. 
PCS-UQ is the only method tested that achieves both competitive marginal width and consistent subgroup coverage across this benchmark. The pattern holds at the individual-dataset level as well (\cref{supp:subgroups}).

\begin{figure}[htbp]
    \centering
    \includegraphics[width=0.48\textwidth]{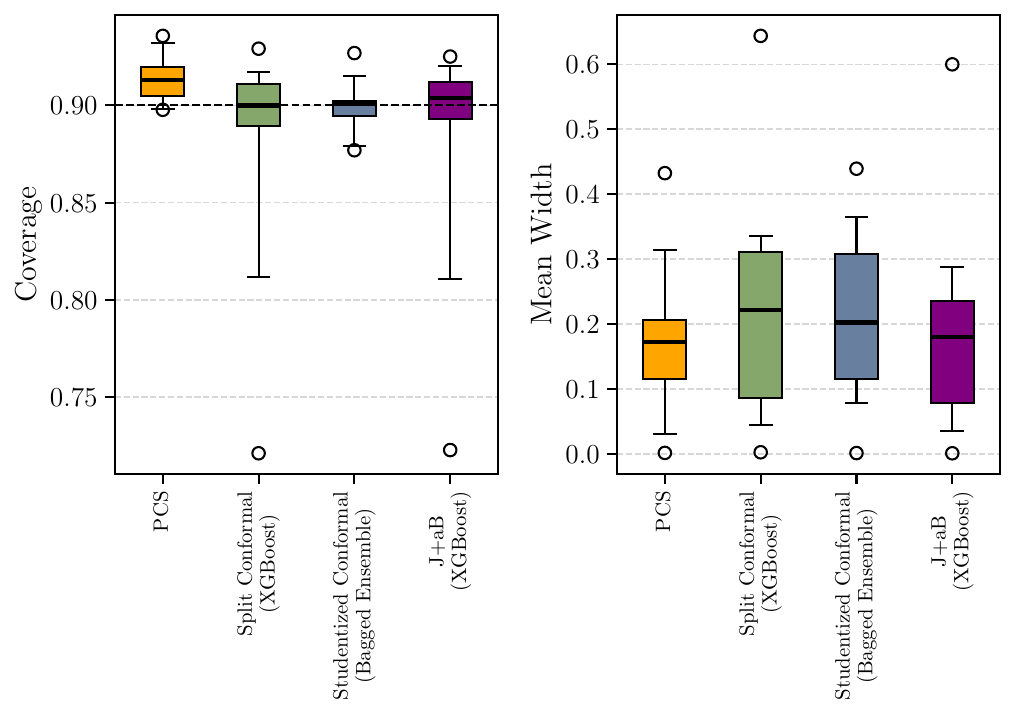}
    \caption{Distributions of average subgroup coverage and width for PCS and conformal regression approaches. For each dataset, we average the test coverage from each subgroup. PCS-UQ maintains subgroup average coverage while producing small average width.}
    \label{fig:regression_subgroup_cov_width}
\end{figure}
\section{PCS-UQ for Multi-Class Classification}
\label{sec:pcs_mc}
We detail the PCS-UQ procedure for generating prediction sets in the multi-class classification setting. 

\paragraph{Notation} We adopt much of the same notation as in \cref{sec:PCS_reg}, except that we assume responses belong to one of $C$ classes, i.e., $y \in \mathcal{Y} = \{1,\ldots, C\}$.
Additionally, let $\hat{f}^{(c)}(;)$ be the predicted probability that a sample belongs to class $c \in \mathcal{Y}$. 
Lastly, for class-probability estimates $\hat{y}^{(1)} \ldots \hat{y}^{(C)}$, let $\hat{y}^{\pi(1)} \ldots \hat{y}^{\pi(C)}$ denote the order statistics. 

\paragraph{Step 1: Data-Splitting and Prediction-Check} Repeat step 1 from \cref{sec:PCS_reg}.

\paragraph{Step 2: Bootstrapping}Repeat step 2 from \cref{sec:PCS_reg}.

\paragraph{Step 3: Generate Uncalibrated Predictions} First, for each $(\mathbf{X}_i, Y_i)$, compute the mean prediction across all bootstrapped models. That is, for class $c \in \mathcal{Y}$, let 
\begin{equation}
\label{eq:pcs_multiclass_pred}
    \hat{y}^{(c)}_i = \frac{1}{|T_i|k} \sum_{j \in [k] } \sum _{b \in T_i} \hat{f}^{c}_{j}(\mathbf{X}_i;\,\mathcal{D}^{(b)}),
\end{equation}
where recall that $T_i$ denotes bootstrap indices where $(\mathbf{X}_i,Y_i)$ is out-of-bag. 
This is similar to the ensemble method proposed in chapter 13 of Yu and Barter \cite{yu2024veridical}.

\paragraph{Step 4: Calibration} We follow the adaptive prediction set (APS) procedure introduced in \cite{romano2020classification}. Obtain APS score $S_i = \sum_{c=1}^{r} \hat{y}^{\pi(c)}_i$, where $\pi(r)  = Y_i$. For $\mathcal{S} = \{ S_i: i \in \mathcal{D}\}$, let $q$ denote the $1-\alpha$ quantile of $\mathcal{S}$.

\paragraph{Step 5: Generating PCS Prediction Sets for Test Point}  For a new test point $\mathbf{X}$, produce the prediction set
\begin{equation*}
    \begin{aligned}    
    \mathcal{S} = \{\pi(1), \pi(2), \dots, \pi(r)\},\,\,\, \text{where } r = \min \bigg\{t: \sum_{c=1}^{t} \hat{y}^{\pi(c)} \geq q \bigg\}
    \end{aligned}
\end{equation*}

\section{Multi-Class Classification Experiments}
\label{sec:class_exp}

\subsection{Experimental Set-up}
\label{subsec:class_exp_setup}

This section details the experimental set-up for our multi-class classification experiments.

\paragraph{Datasets} We use $6$ datasets commonly found in tabular benchmarks \cite{matthias2021openml}. 
These datasets reflect a range of sample sizes, dimensions, and number of classes. 
We use $80\%$ to train and fit various UQ methods, and $20\%$ as our test set.

\begin{table}[ht]
    \centering
    \begin{tabular}{clrrr}
    \toprule
    & Name &  Samples &  Features & Classes\\
    \midrule
        & Language \cite{collins2003collins} & 1000 &  19 & 30 \\
        &  Yeast \cite{horton1996probabilistic} &  1484  & 8 & 10 \\
        &  Isolet \cite{cole1991isolet} &  7797  & 613 & 26 \\
        & Cover Type \cite{blackard1998covertype} & 10000 & 13 & 100 \\
        & Chess \cite{alcalafdez2011keel} & 28056 & 34 & 18 \\
        & Dionis \cite{guyon2019analysis} & 30000 & 60 & 355 \\
     \bottomrule
     \end{tabular}
    \caption{Datasets used for multi-class classification experiments.}
    \label{tab:class_datasets}
\end{table}
\paragraph{Baseline Methods} We compare PCS against three popular conformal multi-class classification methods: Adaptive Prediction Sets (APS) \cite{romano2020classification}, Regularized Adaptive Prediction Sets (RAPS) \cite{angelopoulos2022uncertaintysetsimageclassifiers}, and Top$K$ \footnote{We do not compare to J+aB as we cannot find an implementation of the method for classification.}.
We use the implementation of all conformal methods from the software package \texttt{MAPIE} \cite{Cordier2023Flexible}.
We generate prediction sets for all methods with the following ML models: $\ell_2$-regularized logistic regression, Random Forests \cite{breiman2001random}, AdaBoost \cite{freund1997decision}, XGBoost \cite{chen2016xgboost}, and a one-hidden-layer multi-layer perceptron (MLP). 
We choose regularization parameters in $\ell_2$-regularized logistic regression via $3$-fold cross-validation.
For other ML models, we use the default hyperparameters from \texttt{scikit-learn} \cite{scikit-learn}. 

\paragraph{PCS Hyperparameters} We use all models listed above as candidate models, and choose $k=1$ as we did in the regression experiments; see \cref{sec:PCS_reg} for a description. 
We generate intervals using $B  = 1000$ bootstraps.

\paragraph{Metrics} We measure coverage and average size of prediction sets on the test set.
We aim for $90\%$ coverage, i.e., we set $\alpha = 0.1$.
The size of prediction sets is normalized by the number of classes, $C$. 
Results are averaged across 10 train-test splits. 

\subsection{Results}  
\label{subsec:class_results}

This section details results for experiments described in the previous section. 
For APS, RAPS, and Top$K$, we report performance using Random Forests as the estimator since it achieves coverage and has the smallest prediction set size on average across our $6$ datasets. 
We emphasize that we choose the best-performing estimators for conformal methods --- information that is unavailable in practice.

\paragraph{All Methods Achieve Desired Coverage} Test-set coverage is reported for all methods and datasets in \cref{tab:coverage_classification}. All methods achieve the desired coverage.

\paragraph{PCS Produces Smaller Sets than Conformal Approaches} \cref{fig:class_width_comparisons} displays average prediction set size for all methods. PCS produces smaller average prediction set size than all other methods on five out of six datasets. The table below summarizes the median reduction in set size by PCS over all conformal methods.

\begin{figure}[h]
    \centering
    \includegraphics[width=0.45\textwidth]{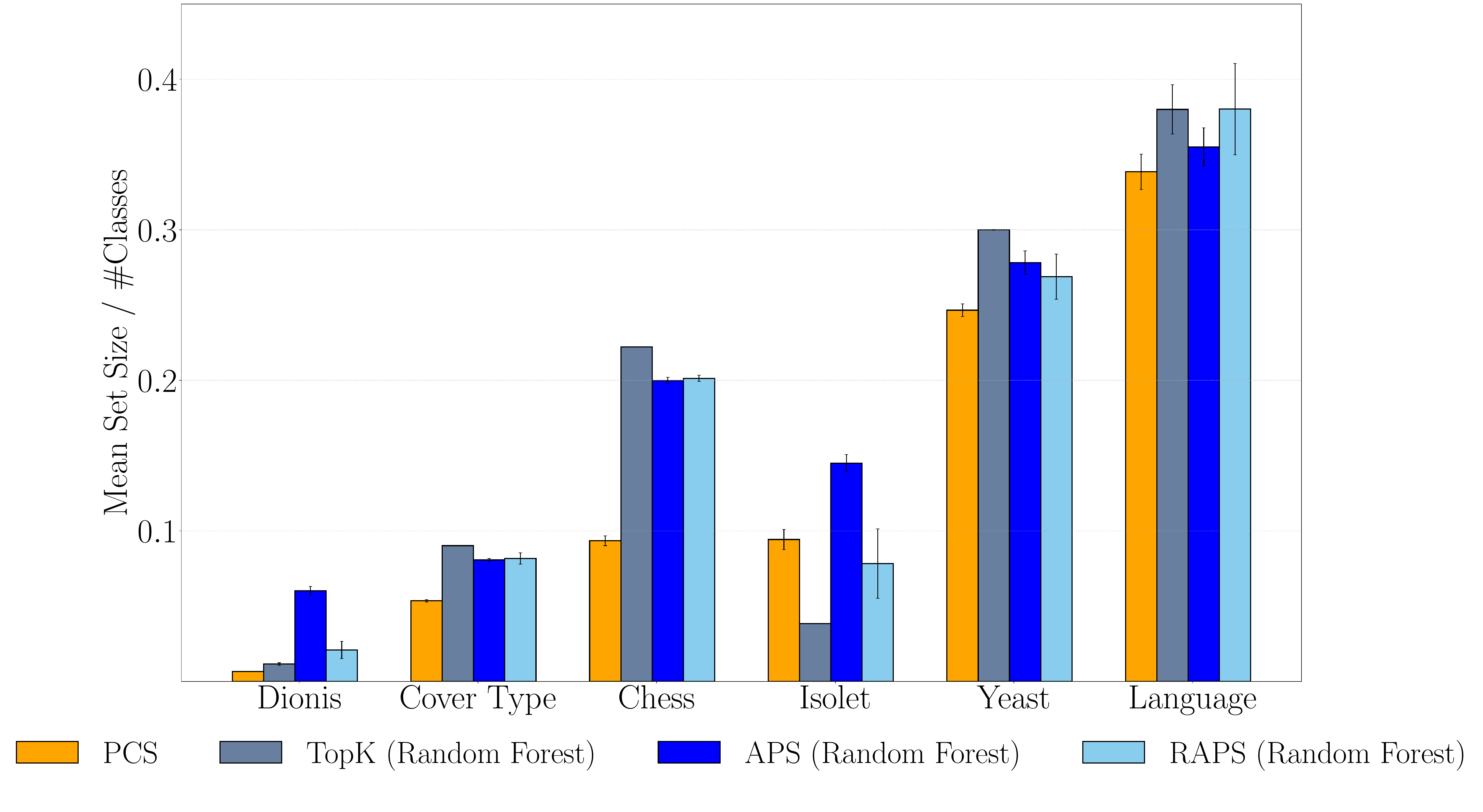}
    \caption{Comparison of average prediction set size of PCS against best-performing conformal methods. PCS significantly reduces set size across 5 out of 6 datasets. }
    \label{fig:class_width_comparisons}
\end{figure}

\begin{table}[h]
\centering
\begin{tabular}{rrr}
\toprule
 Top$K$ (RF) & APS (RF) & RAPS (RF)\\
\midrule
29.2\% & 34.3\% & 22.7\% \\
\bottomrule
\end{tabular}
\caption{Median \% reduction in set size by PCS over best-performing conformal approaches across 6 multi-class classification datasets.}
\end{table}

 \section{PCS Uncertainty Quantification for Deep-Learning}
\label{sec:pcs_dl}
While PCS significantly reduces set size in our experiments, training multiple bootstrapped models can be prohibitively expensive for large deep-learning models. 
In this section, we discuss computationally efficient methods for generating prediction sets for deep-learning models via PCS and experimental results on large-scale deep-learning datasets. 

\subsection{Approximate PCS-UQ}
\label{subsec:approx_schemes}
Instead of training DL models across $B$ different bootstrapped datasets, we proceed as follows. 
First, we perform a simple train-calibration data split and train a \emph{single} DL model on the training set $\mathcal{D}_{\text{train}}$. 
Throughout this description, we assume that the DL model achieves sufficient predictive accuracy outside the training set.
If not, we recommend trying a different DL architecture or training algorithm. 
This emphasizes that establishing strong predictability is key for trustworthy UQ and, ultimately, for trustworthy ML and AI deployment.
Next, we create $B$ perturbed DL models as follows: (1) \textit{Weighted Monte-Carlo Dropout.}
We create $B$ perturbed models by randomly dropping out nodes in a DL model \cite{gal2016dropout}. 
The probability of dropout is set to be proportional to the activation. 
(2) \textit{Additive Noise Perturbation.} We create $B$ perturbed models by adding mean-zero Gaussian noise to the weights \cite{zhou2024knowgraph, gan2026neural}. 
The variance of the added noise is set to be the initialization variance.  

\subsection{Experimental Results}
We perform experiments comparing the original PCS-UQ for multi-class classification, our approximation methods, and the conformal inference methods on three computer vision benchmarks.

\paragraph{Datasets} We use the three following standard computer vision benchmarks. Descriptions of the datasets are in \cref{supp:dl}.
Summary statistics of the datasets are as follows.

\begin{table}[h]
    \centering
    \begin{tabular}{lrr}
    \toprule
    Name &  Samples & Classes\\
    \midrule
        CIFAR-100 \cite{krizhevsky2009learning} & 60000 & 100 \\
        Caltech Birds \cite{welinder2010caltech} &  11788 & 200 \\
        ImageNet-Small \cite{le2015tiny} &  100000 & 200 \\
     \bottomrule
     \end{tabular}
    \caption{Datasets used for deep-learning classification experiments. }
    \label{tab:dl_datasets}
\end{table}

\paragraph{Model Details} For all datasets, we use a ResNet 18 \cite{he2016deep}.

\paragraph{UQ Methods} We compare PCS-UQ for multi-class classification 
described in \cref{sec:pcs_mc}, and the approximation methods described above. 
For the original multi-class PCS-UQ, we use $B = 100$ bootstraps.
Since PCS-UQ does not require a separate validation set due to the use of OOB samples, we combine the training and validation sets. 
For the approximation methods described above, we create $B = 1000$ models.

\paragraph{Metrics} We aim for $90\%$ coverage, and measure average prediction set size on the test set. 
Further, we measure the time taken (rounded to the nearest minute) to produce prediction sets for each UQ method. 
Results are averaged across 10 train-test splits.

\paragraph{Results} The results for each dataset are presented as follows. 
All UQ methods achieve the desired coverage (See \cref{supp:dl}). 
Original PCS-UQ improves upon conformal methods by producing prediction sets that are $26\%$ smaller on average. 
Both PCS approximation methods improve upon conformal methods by approximately $20\%$, but do not match the performance of the original PCS method.
However, the approximation schemes are approximately $30 \text{ to } 100\times$ faster than the original PCS method. 
As such, the approximation methods strike a balance between computational efficiency and improving the size of prediction sets.


\begin{table}[t]
\centering
\small
\setlength{\tabcolsep}{5pt}
\renewcommand{\arraystretch}{1.2}
\begin{tabular}{llccccccc}
\toprule
& & \multicolumn{2}{c}{\textbf{CIFAR-100}} & \multicolumn{2}{c}{\textbf{Caltech Birds}} & \multicolumn{2}{c}{\textbf{ImageNet-Small}} \\
& \textbf{Method} & Av. Size & Time & Av. Size & Time & Av. Size & Time \\
\midrule
& \textcolor{blue}{APS} & 6.8 & 2 & 16.6 & 2 & 14.4 & 3 \\
& \textcolor{cyan}{RAPS} & 6.5 & 2 & 12.8 & 2 & 11.2 & 3 \\
& TopK & 8.5 & 2 & 17.6 & 2 & 13 & 3 \\
\midrule
\multirow{3}{*}{\rotatebox[origin=c]{90}{\textcolor{orange}{PCS}}}
& Original & \textbf{3.7} & 350 & \textbf{8.3} & 100 & \textbf{8.8} & 500 \\
& Dropout & 4.4 & 4 & 9.8 & 3 & 9.8 & 5 \\
& Noise & 4.2 & 3 & 9.4 & 3 & 9.6 & 5 \\
\bottomrule
\end{tabular}
\caption{Average prediction set size and runtime (minutes) across multiple computer vision benchmarks. Set sizes are normalized by the number of classes. PCS-UQ outperforms conformal approaches in terms of prediction set size. Both proposed approximation schemes strike a balance between computational efficiency and size of prediction sets.}
\label{tab:pcs_dl}
\end{table}

\section{Connection to conformal inference}
\label{sec:theory}
This section formalizes connections between PCS-UQ and conformal prediction, as alluded to in \cref{sec:intro}.
We start by discussing how the multiplicative calibration in PCS-UQ can be regarded as a conformal score function. 
Then, we utilize this connection to theoretically establish that a modified PCS-UQ algorithm achieves the desired coverage with exchangeable data. 

\paragraph{Multiplicative Calibration as Conformal Score Function}
Conformal prediction relies on specifying a score function that measures the quality of the prediction. For example, residuals are typically used in regression as a valid conformal score. 
We show that the multiplicative calibration step in PCS-UQ, i.e., $\gamma$, can be regarded as a novel conformal score function.  
In our setting, a larger $\gamma$ indicates poorer prediction.

\paragraph{Modified PCS-UQ achieves desired coverage} 
The algorithm proposed in \cref{sec:PCS_reg} uses the validation data for screening prediction algorithms and calibration. 
Doing so makes it difficult to establish theoretical guarantees that the produced interval is statistically valid. 
The modified PCS-UQ procedure overcomes this issue by randomly splitting the data into training, validation, and calibration sets.
The training and validation data are used for prediction-check and fitting the bootstrapped models, while the calibration set is used \emph{solely} to learn the scaling factor $\gamma$.
With this modified algorithm and the connection to conformal score functions detailed above, we utilize previous results showing that any prediction interval formed using a valid score function achieves the desired coverage. 
The formal result is presented in \cref{supp:theory}.
This connection suggests that other PCS components, such as prediction-check, could be incorporated into conformal pipelines more broadly while retaining validity guarantees.

\section{Ablation Experiments}
\label{sec:ablation}
As discussed in \cref{sec:PCS_reg}, generating PCS prediction intervals consists of a few key steps: prediction-checking, bootstrapping, and multiplicative calibration. 
We perform a number of ablation experiments to demonstrate the utility of each of these steps as follows.

\paragraph{Effect of prediction-check} We vary the number of screened models (i.e., $k$) and measure average width and $R^2$ on the test set for $4$ datasets.
Results are displayed in \cref{fig:ablation-model}, which shows that including poor-performing models leads to larger intervals -- highlighting the importance of screening models via their prediction performance.
For all 4 datasets, using the top 1 algorithm gives the best performance.
However, for the Diamond dataset, the performances obtained using the top one, top two, and top three algorithms are all similar.
This indicates that it could be beneficial to dynamically choose the number of algorithms to capture more diverse structures of data (e.g. heterogeneity).
One way to achieve this is via the Model Confidence Set, whose goal is to select the subset of all nearly optimal algorithms \cite{lei2025moderntheorycrossvalidationlens}.
We leave this modification for future work.

\begin{figure}[htbp]
    \centering
    \includegraphics[width=0.48\textwidth]{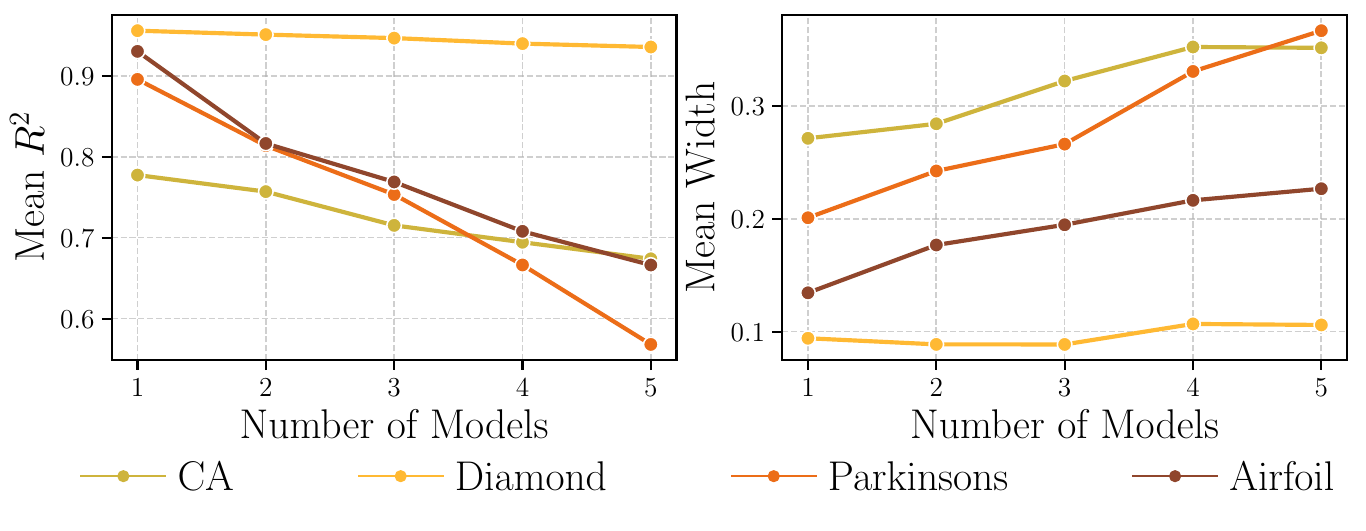}
    \caption{Performance of PCS with varying number of selected models over 4 datasets. The left panel displays the average $R^2$ of selected models; the right panel displays the average interval width. As the number of selected models increases, the $R^2$ decreases while the interval width increases.}
    \label{fig:ablation-model}
\end{figure}

\paragraph{Varying the number of bootstraps} 

A key step in the PCS procedure is creating a pseudo-universe of datasets via the bootstrap.
In \cref{fig:ablation-nboot}, we display the average interval size and coverage as we vary the number of bootstraps.
Performance stabilizes after 100 bootstraps.
Bootstrapping allows one to simulate and capture uncertainty during the data collection process. 

\begin{figure}[htbp]
    \centering
    \includegraphics[width=0.48\textwidth]{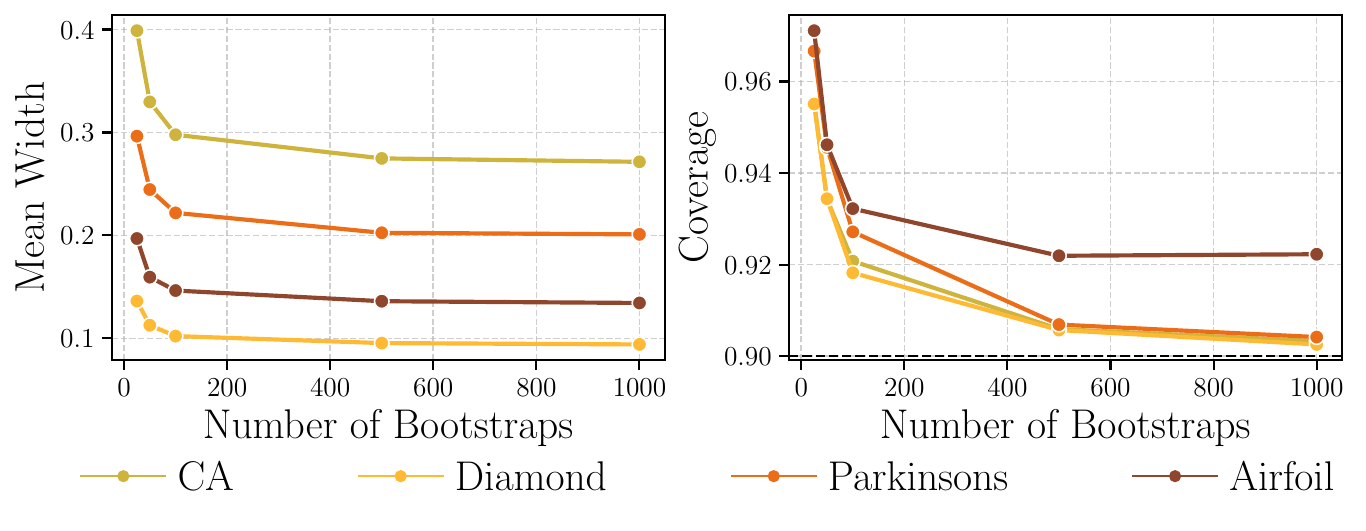}
    \caption{Performance of PCS-UQ with varying number of bootstraps over 4 datasets. The left panel displays the average interval width; the right panel displays the coverage. Both metrics stabilize after 100 bootstraps.}
    \label{fig:ablation-nboot}
\end{figure}

\paragraph{Multiplicative calibration} To investigate the effectiveness of multiplicative calibration for subgroup coverage, we replace multiplicative calibration with additive calibration. 
That is, we enlarge intervals by adding a fixed constant to both ends, instead of scaling the interval widths multiplicatively.
We examine subgroup coverage for the Miami housing dataset \cite{bourassa2021miami} as in \cref{sec:reg_exp}. 
\cref{fig:ablation_calibration} shows that additive calibration is unable to achieve target coverage for houses larger than $2850$ square feet. 
Similar results hold for other datasets. 
Additive calibration is unable to enlarge intervals sufficiently for samples with high uncertainty, while multiplicative scaling does so effectively. 

\begin{figure}[htbp]
    \centering
    \includegraphics[width=0.48\textwidth]{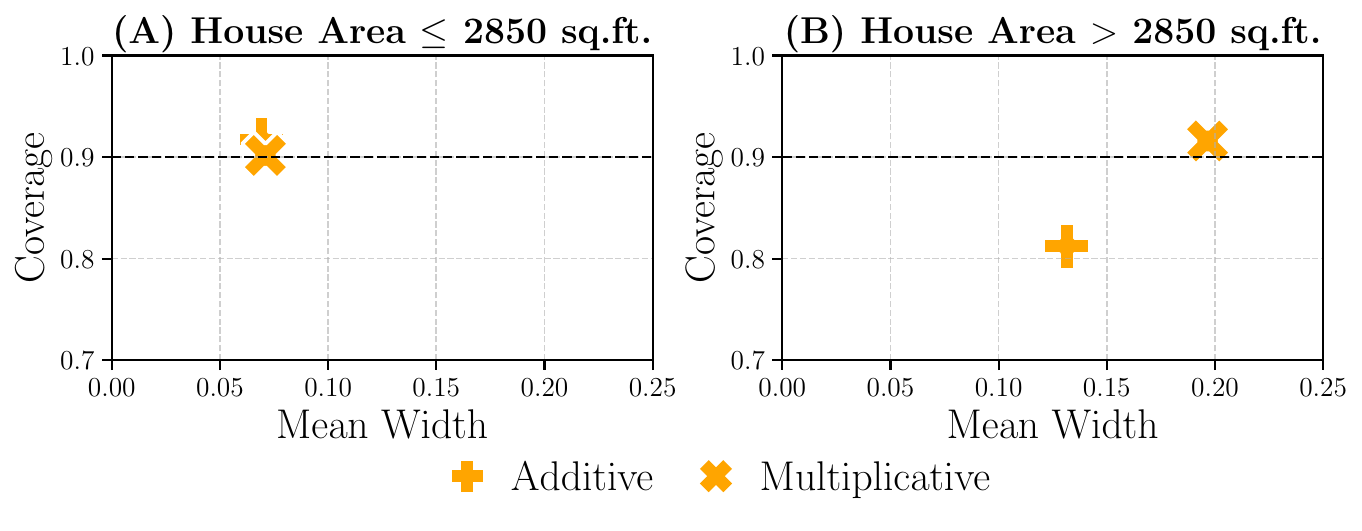}
    \caption{Subgroup coverage of additive and multiplicative calibration on the Miami housing dataset \cite{bourassa2021miami}. Additive calibration is unable to achieve target coverage for large houses, while multiplicative calibration adjusts length to do so effectively. }
    \label{fig:ablation_calibration}
\end{figure}

\section{Discussion}
\label{sec:disc}
Our approach builds upon key PCS principles to develop prediction intervals.
Extensive empirical comparisons of PCS-UQ to conformal methods show we reduce the size of prediction sets by over $20\%$ across a variety of settings. 
Our paper also establishes theoretical connections to conformal inference that might be of independent interest. 
While our paper takes a step towards establishing PCS-driven UQ, there are many extensions and improvements to explore for future work. 
We detail some of these as follows.

\paragraph{Uncertainty from Data Cleaning \& Judgment Calls} This paper only focuses on uncertainty due to inter-sample variability. 
As discussed in \cref{sec:intro}, data cleaning choices and other judgment calls due to inter-researcher variability can lead to drastically different conclusions. 
An interesting direction for future work is to find approaches to assess uncertainty from every part of the DSLC to create a more stable UQ method.  
Some work in this direction has been done in a follow-up paper by Yu and her other collaborators, which devises a method called CLEAR that combines PCS-UQ with the Conformal Quantile Regression (CQR) method to capture both epistemic and aleatoric uncertainties \cite{azizi2026clear}.

\paragraph{Extension to Binary Classification} Our approach for classification produces only prediction sets, which are often unsuitable for binary classification. 
In the binary setting, producing intervals that contain the true $\mathbb{P}(Y = 1 | \mathbf{X})$ is often more relevant to practitioners. 
As a simple example, producing intervals that state there is a $40\%-60\%$ chance of rain is more instructive than producing a prediction set that consists of both rain and no rain. 
Constructing intervals for the underlying class probability is difficult because we do not observe empirical class probabilities, but only binary labels. 
Observations of class labels also make evaluation of probability intervals challenging. 

\paragraph{Extension to LLMs and Generative Models} An exciting future direction is using PCS to assess the uncertainty of LLMs and other generative models. 
Doing so requires defining appropriate notions of prediction sets and coverage. 
We believe that robust UQ for LLMs and generative models has the potential to reduce hallucinations and improve factuality \cite{cherian2024largelanguagemodelvalidity}, thus enhancing trustworthiness of AI systems.

\section{Acknowledgements}
\label{sec:ack}
We thank Rina Foygel Barber, Giles Hooker, Jing Lei, Anthony Ozerov, Aaditya Ramdas, Jake A. Soloff, and Ryan Tibshirani for insightful comments and useful suggestions. 
We also gratefully acknowledge partial support from NSF grant DMS-2413265, NSF grant DMS 2209975, NSF grant DMS-2515767, NSF grant 2023505 on Collaborative Research: Foundations of Data Science Institute (FODSI), the NSF and the Simons Foundation for the Collaboration on the Theoretical Foundations of Deep Learning through awards DMS-2031883 and 814639, NSF grant MC2378 to the Institute for Artificial CyberThreat Intelligence and OperatioN (ACTION), and NIH grant R01GM152718.

\bibliography{ref}
\appendix
\setcounter{figure}{0}
\setcounter{table}{0}

\renewcommand{\thetable}{S\arabic{table}}
\renewcommand{\thefigure}{S\arabic{figure}}
\renewcommand{\thesection}{S\arabic{section}}

\newpage
\onecolumn

\section{Overview of Uncertainty Quantification Methods for Regression}
\label{supp:overview_different_methods_reg}

\subsection{Split Conformal Regression}
\label{supp:split_conformal_overview}

We describe the Split Conformal procedure from \cite{lei2018distribution}.
We use the same notation as established in Section \ref{sec:PCS_reg}.

\paragraph{Step 1: Data-Splitting and Model Training} Randomly split $\mathcal{D}$ into a training set $\mathcal{D}_{\text{tr}}$ and validation set $\mathcal{D}_{\text{val}}$. Fit algorithm $f$ on training set to obtain fitted model $\hat{f}(\cdot; \mathcal{D}_{\text{tr}})$. 

\paragraph{Step 2: Calibration} For each $(\mathbf{X}_i, Y_i) \in \mathcal{D}_{\text{val}}$, make prediction using $\hat{f}(\cdot; \mathcal{D}_{\text{tr}})$ and obtain conformal score $S_i = \big|Y_i - \hat{f}(\mathbf{X}_i; \mathcal{D}_{\text{tr}})\big|$. 
Let $q$ be the $1 - \alpha$ quantile of the set $\{S_i: i \in [|\mathcal{D}_{\text{val}}|]\}$.

\paragraph{Step 3: Generate Split Conformal Prediction Interval} For a new test point $\mathbf{X}$, produce the prediction interval
\begin{equation*}
    \mathcal{I} = \big[\hat{f}(\mathbf{X}; \mathcal{D}_{\text{tr}}) - q, \hat{f}(\mathbf{X}; \mathcal{D}_{\text{tr}}) + q\big]
\end{equation*}

\subsection{Studentized Conformal Regression}
\label{supp:studentized_overview}

We describe the Studentized Conformal procedure from \cite{lei2018distribution}.
We use the same notation as established in Section \ref{sec:PCS_reg}.

\paragraph{Step 1: Data-Splitting and Model Training} Randomly split $\mathcal{D}$ into a training set $\mathcal{D}_{\text{tr}}$ and validation set $\mathcal{D}_{\text{val}}$. Fit algorithm $f$ on training set to obtain fitted model $\hat{f}(\cdot; \mathcal{D}_{\text{tr}})$. 
Then let $\mathcal{D}_{\text{tr}}^{\text{res}} = \{(\mathbf{X}_i, |Y_i - \hat{f}(\mathbf{X}_i; \mathcal{D}_{\text{tr}})|): (\mathbf{X}_i, Y_i) \in \mathcal{D}_{\text{tr}}\}$ be the training set with residuals as the response. 
Fit algorithm $\sigma$ on $\mathcal{D}_{\text{tr}}^{\text{res}}$ to obtain fitted model $\hat{\sigma}(\cdot; \mathcal{D}_{\text{tr}}^{\text{res}})$.

\paragraph{Step 2: Calibration} For each $(\mathbf{X}_i, Y_i) \in \mathcal{D}_{\text{val}}$, make predictions using $\hat{f}(\cdot; \mathcal{D}_{\text{tr}})$, $\hat{\sigma}(\cdot; \mathcal{D}_{\text{tr}}^{\text{res}})$; obtain conformal score
\begin{equation*}
    S_i = \frac{\big|Y_i - \hat{f}(\mathbf{X}_i; \mathcal{D}_{\text{tr}})\big|}{\hat{\sigma}(\mathbf{X}_i; \mathcal{D}_{\text{tr}}^{\text{res}})}
\end{equation*}
Let $q$ be the $1 - \alpha$ quantile of the set $\{S_i: i \in [|\mathcal{D}_{\text{val}}|]\}$.

\paragraph{Step 3: Generate Studentized Conformal Prediction Interval} For a new test point $\mathbf{X}$, produce the prediction interval
\begin{equation*}
    \mathcal{I} = \big[\hat{f}(\mathbf{X}; \mathcal{D}_{\text{tr}}) - q \times \hat{\sigma}(\mathbf{X}; \mathcal{D}_{\text{tr}}^{\text{res}}), \hat{f}(\mathbf{X}; \mathcal{D}_{\text{tr}}) + q \times \hat{\sigma}(\mathbf{X}; \mathcal{D}_{\text{tr}}^{\text{res}})\big]
\end{equation*}

\subsection{Jackknife+-after-Bootstrap}
\label{supp:jackknife_bootstrap}

We describe the Jackknife+-after-bootstrap procedure from \cite{kim2020predictive}.
We use the same notation as established in Section \ref{sec:PCS_reg}.

\paragraph{Step 1: Bootstrap} Bootstrap $\mathcal{D}$, the training data, $B$ times to obtain bootstrapped datasets $\mathcal{D}^{(1)}, \dots, \mathcal{D}^{(B)}$. 
Fit algorithm $f$ to each bootstrapped dataset to obtain bootstrapped models $\hat{f}(\cdot; \mathcal{D}^{(1)}), \dots, \hat{f}(\cdot; \mathcal{D}^{(B)})$. 
For each training point $(\mathbf{X}_i, Y_i) \in \mathcal{D}$, let $T_i \subseteq [B]$ denote the set of bootstrap indices such that $(\mathbf{X}_i, Y_i) \notin \mathcal{D}^{(b)}$ for all $b \in T_i$. This is the set of bootstrapped models where $(\mathbf{X}_i, Y_i)$ is out-of-bag.

\paragraph{Step 2: Compute Residuals} For any index $i$ of the training dataset, define
\[
\hat{f}_{-i}(x) = \frac{1}{|T_i|} \sum_{b \in T_i} \hat{f}(x; \mathcal{D}^{(b)}).
\]
This is the average prediction of $x$ using all models where $(\mathbf{X}_i, Y_i)$ is out-of-bag. 
Then, for $(\mathbf{X}_i, Y_i) \in \mathcal{D}$, define the residuals
\[
R_i = |Y_i - \hat{f}_{-i}(\mathbf{X}_i)|.
\]

\paragraph{Step 3: Generate Jackknife+-after-Bootstrap Prediction Interval} For a new test point $\mathbf{X}$, define the sets $-\mathcal{P}_{\text{lower}} = \{-(\hat{f}_{-i}(\mathbf{X}) - R_i): i=1,2,\dots, n_{\text{train}}\}$, and $\mathcal{P}_{\text{upper}} = \{\hat{f}_{-i}(\mathbf{X}) + R_i: i=1,2,\dots, n_{\text{train}}\}$.
Note that for each $i$, we are not using all bootstrapped models, but only ones where training point $i$ is out-of-bag.
Then with desired coverage $1-\alpha$, the prediction interval is
\[
\left[-q_{1 - \alpha}(-\mathcal{P}_{\text{lower}}), q_{1-\alpha}(\mathcal{P}_{\text{upper}})\right].
\]

\subsection{PCS procedure from Chapter 13 of Yu and Barter (2024)}
\label{supp:PCS_ch13}

We describe the PCS procedure from Chapter 13 of \cite{yu2024veridical} for generating prediction intervals in the regression setting.  
Henceforth, we refer to this procedure as PCS (Ch 13). 
We use the same notation as established in Section \ref{sec:PCS_reg}.

\paragraph{Step 1: Data-Splitting and Prediction-Check} Randomly split $\mathcal{D}$ into a training set $\mathcal{D}_{\text{tr}}$ and validation set $\mathcal{D}_{\text{val}}$ . Train each algorithm on the training set to obtain fitted models $\hat{f}_1(\cdot; \mathcal{D}_{\text{tr}}), \ldots \hat{f}_M(\cdot; \mathcal{D}_{\text{tr}})$. Choose the top-$k$ performing algorithms according to loss $l$.  Without loss of generality, let $f_1 \ldots f_k$ denote the top-$k$ performing algorithms. 

\paragraph{Step 2: Bootstrapping} Bootstrap the \emph{training} set $B$ times to obtain bootstrapped samples $\mathcal{D}_{\text{tr}}^{(1)} \ldots \mathcal{D}_{\text{tr}}^{(B)}$. Fit all algorithms chosen in the previous step on every bootstrapped dataset $\mathcal{D}_{\text{tr}}^{(b)}$ to obtain bootstrapped models $\{\hat{f}_j(; \mathcal{D}_{\text{tr}}^{(b)}), j \in [k], b \in [B]\}$. 
For each $(\mathbf{X}_i, Y_i) \in \mathcal{D}_{\text{val}}$, we form a prediction set $\mathcal{P}_i = \{\hat{f}_j(\mathbf{X}_i; \mathcal{D}_{\text{tr}}^{(b)}); j \in [k], b \in [B]\}$.

\paragraph{Step 3: Calibration} First, for each $(\mathbf{X}_i, Y_i) \in \mathcal{D}_{\text{val}}$, we form an uncalibrated interval $[q_{\alpha/2}(\mathcal{P}_i), q_{1 - \alpha/2}(\mathcal{P}_i)]$, where $q_{\beta}(S)$ is the $\beta$ quantile for a set $S$. 
For a multiplicative scaling factor $\gamma$, generate a scaled interval
\begin{equation*}
    \mathcal{I}_i = \Big[q_{0.5} ({\cal P}_i) - \gamma \times 
    \big(q_{0.5} ({\cal P}_i) - q_{\alpha/2} ({\cal P}_i)\big), \quad q_{0.5} ({\cal P}_i) + \gamma \times 
    \big(q_{1-\alpha/2} ({\cal P}_i) - q_{0.5} ({\cal P}_i)\big) \Big]
\end{equation*}
We choose the scaling factor $\gamma$ such that we achieve $1-\alpha$ coverage on the data $\mathcal{D}_{\text{val}}$.

\paragraph{Step 4: Generating PCS Prediction Interval} For a new test point $\mathbf{X}$ let $\mathcal{P} = \{\hat{f}_j(\mathbf{X}; \mathcal{D}_{\text{tr}}^{(b)}); j \in [k], b \in B\}$. Then, we produce prediction interval 
\begin{equation*}
    \mathcal{I} = \Big[q_{0.5} ({\cal P}) - \gamma \times 
    \big(q_{0.5} ({\cal P}) - q_{\alpha/2} ({\cal P})\big), \quad q_{0.5} ({\cal P}) + \gamma \times 
    \big(q_{1-\alpha/2} ({\cal P}) - q_{0.5} ({\cal P})\big) \Big]
\end{equation*}

\section{Additional Regression Results}
\label{supp:reg_results}

In this section, we provide additional results for our regression experiments. 

\subsection{Coverage}
\label{supp:coverage}
We report coverage for the best-performing methods (as measured by average width) across our 17 real-world datasets.  
All methods achieve desired coverage. 

\begin{table}[H]
    \centering
    \begin{tabular}{lrrrr}
\toprule
Method & PCS & Split Conformal (XGBoost) & Studentized Conformal (Bagged Ensemble) & J+aB (XGBoost) \\
\midrule
Naval          & 0.905 & 0.906 & 0.902 & 0.905 \\
Energy         & 0.920 & 0.901 & 0.912 & 0.919 \\
Computer       & 0.902 & 0.901 & 0.902 & 0.900 \\
Miami          & 0.903 & 0.900 & 0.902 & 0.902 \\
Diamond        & 0.902 & 0.900 & 0.898 & 0.901 \\
Insurance      & 0.910 & 0.901 & 0.908 & 0.909 \\
Elevator       & 0.904 & 0.898 & 0.898 & 0.898 \\
Airfoil        & 0.917 & 0.914 & 0.910 & 0.906 \\
Powerplant     & 0.902 & 0.898 & 0.900 & 0.900 \\
Concrete       & 0.918 & 0.906 & 0.902 & 0.902 \\
Debutanizer    & 0.906 & 0.892 & 0.894 & 0.906 \\
Superconductor & 0.903 & 0.900 & 0.900 & 0.903 \\
Parkinsons     & 0.905 & 0.900 & 0.897 & 0.898 \\
CA             & 0.904 & 0.900 & 0.898 & 0.901 \\
Kin8nm         & 0.905 & 0.900 & 0.901 & 0.903 \\
QSAR           & 0.903 & 0.900 & 0.901 & 0.898 \\
Protein        & 0.902 & 0.899 & 0.899 & 0.899 \\
\bottomrule
\end{tabular}

    \caption{Coverage for PCS, and best-performing conformal methods across our $17$ real-world datasets. All methods achieve desired coverage. }
    \label{tab:coverage_regression}
\end{table}

\subsection{Additional Subgroup Results}
\label{supp:subgroups}

\paragraph{Subgroup Construction Procedure} We manually construct subgroups for each dataset.
First, we fit a Random Forest on the dataset and identify the feature with the highest importance. 
If the most important feature is binary or categorical, we form the subgroups by partitioning the data by category. 
If the feature is numerical, we inspect natural breaks in its distribution and partition the data accordingly (e.g. splitting between modes if the distribution is multi-modal). 
For a feature whose distribution does not have natural breaks, we partition the data into quartiles based on that feature.

Next, we provide subgroup results for additional datasets.

\paragraph{Miami Housing \cite{bourassa2021miami}} The task is to predict selling prices of houses in Miami.
We form subgroups based on the square footage of the house.
Split conformal (XGBoost), Studentized conformal (Bagged Ensemble), and J+aB (XGBoost) do not achieve the desired coverage in the large-area subgroup.
PCS-UQ adapts the width of its intervals to achieve coverage in both subgroups.

\begin{figure}[htbp]
    \centering
    \includegraphics[width=0.8\textwidth]{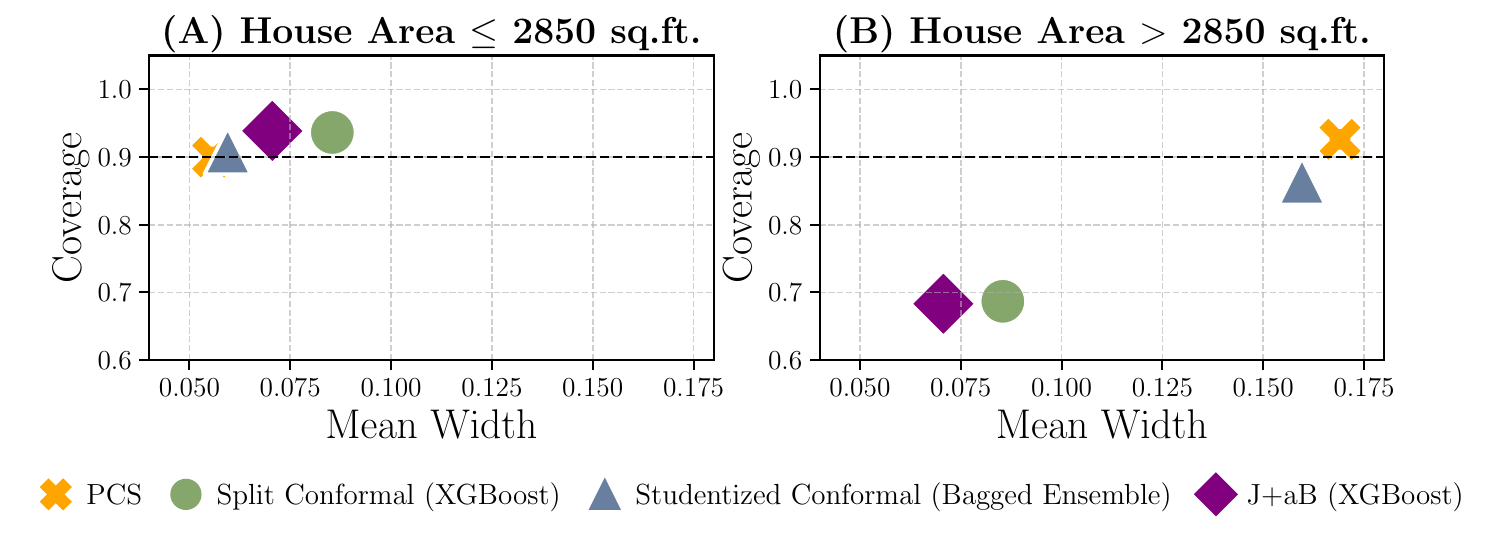}
    \caption{Coverage and width for PCS, and conformal regression approaches on subgroups in the Miami Housing dataset \cite{bourassa2021miami}. Panels (A) and (B) demonstrate performance on subgroups formed by square footage of the house. PCS adapts width of intervals to maintain coverage across subgroups. Other conformal methods either do not achieve subgroup coverage  or have larger width.}
    \label{fig:regression_subgroup_miami}
\end{figure}

\paragraph{Insurance \cite{ali2020pycaret}} The task is to predict insurance charges for customers.
We form subgroups by partitioning the data into smokers and non-smokers. 
PCS adapts its width to maintain coverage across subgroups. 
Split Conformal (XGBoost) and J+aB (XGBoost) achieve coverage but produce larger intervals. 
Studentized Conformal (Bagged Ensemble) fails to achieve coverage in the Smoker subgroup and produces larger intervals.
\begin{figure}[H]
    \centering
    \includegraphics[width=0.8\textwidth]{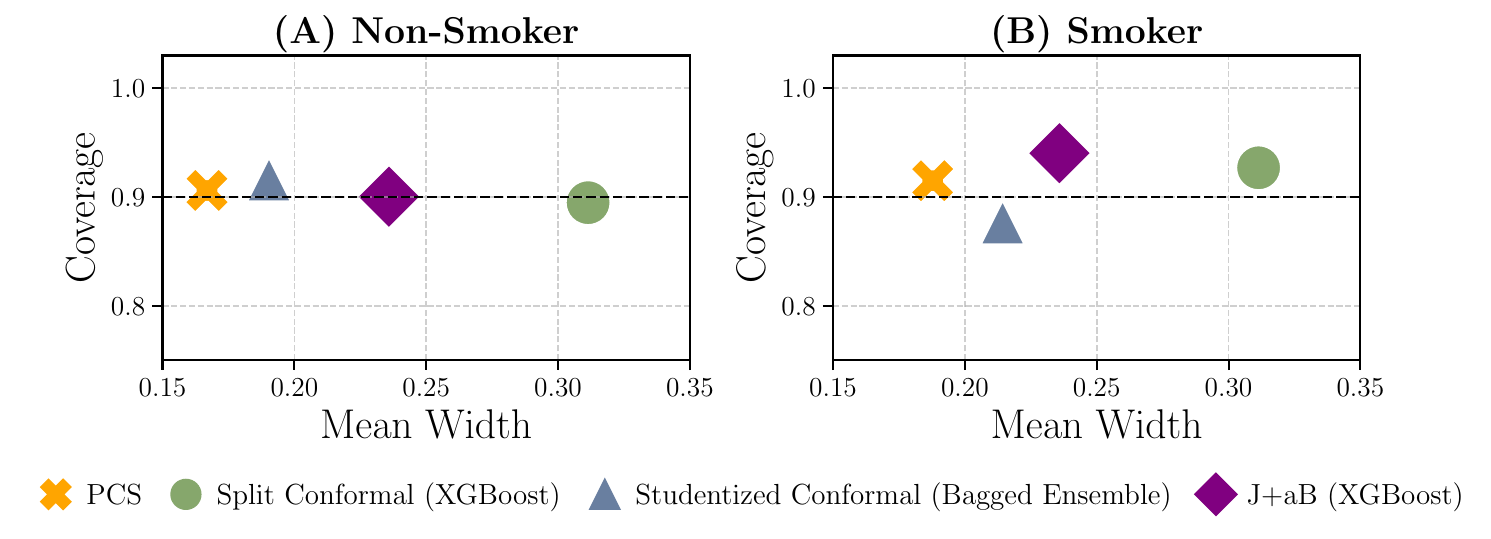}
    \caption{Coverage and width for PCS, and conformal regression approaches on subgroups in the Insurance dataset \cite{ali2020pycaret}. Panels (A) and (B) demonstrate performance for non-smokers and smokers respectively. PCS adapts its width to maintain coverage, while other methods fail to achieve desired coverage or produce larger intervals. }
    \label{fig:regression_subgroup_insurance}
\end{figure}

\paragraph{Energy \cite{tsanas2012energy}} The task is to predict heating load requirements for buildings.
We form subgroups based on roof area of the house. 
PCS adapts its width to maintain coverage across subgroups. 
All conformal methods fail to achieve coverage in the subgroup with smaller roof areas.

\begin{figure}[H]
    \centering
    \includegraphics[width=0.8\textwidth]{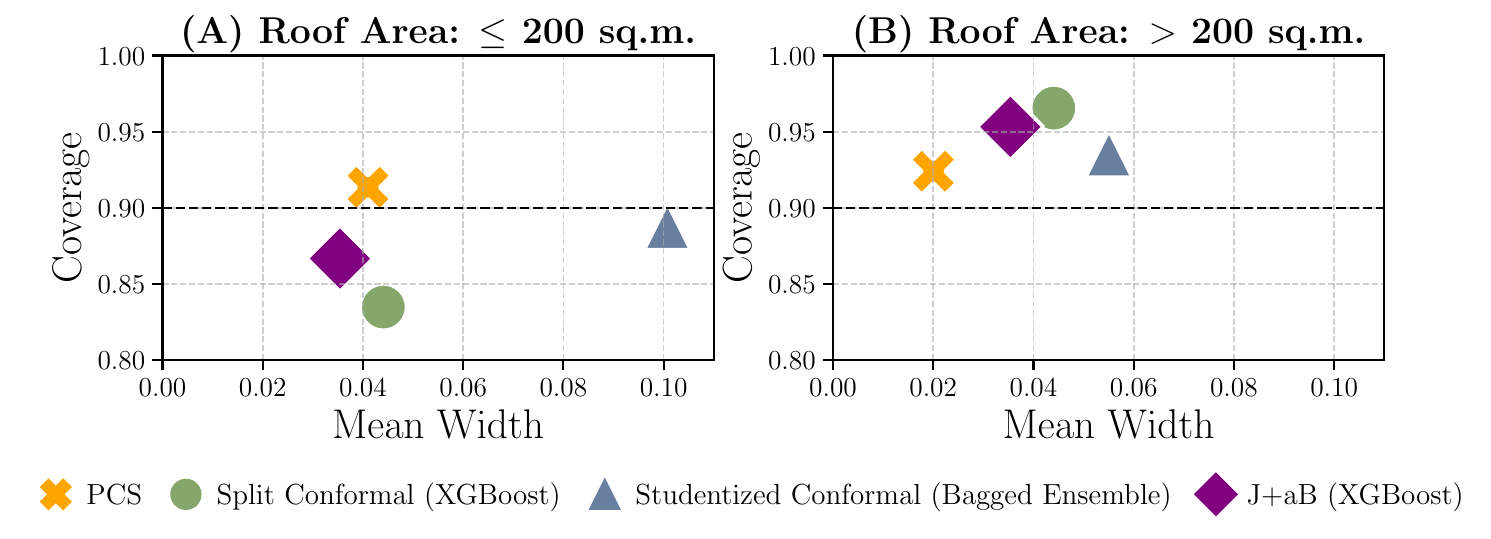}
    \caption{Coverage and width for PCS, and conformal regression approaches on subgroups in the Energy Efficiency dataset \cite{tsanas2012energy}. Panels (A) and (B) demonstrate performance on subgroups formed by roof area of the house. PCS adapts its width to maintain coverage, while other methods fail to achieve desired coverage or produce larger intervals.}
    \label{fig:regression_subgroup_energy}
\end{figure}

\paragraph{Airfoil \cite{brooks1989airfoil}} The task is to predict sound pressure levels of airfoil blades. 
We form subgroups based on the frequency of the sound of the airfoil.
PCS adapts its width to maintain coverage across subgroups and produces matching or shorter intervals than all conformal methods.
Studentized Conformal (Bagged Ensemble) slightly under-covers in the large-frequency subgroup, while the other conformal methods achieve desired coverage.
\begin{figure}[H]
    \centering
    \includegraphics[width=0.8\textwidth]{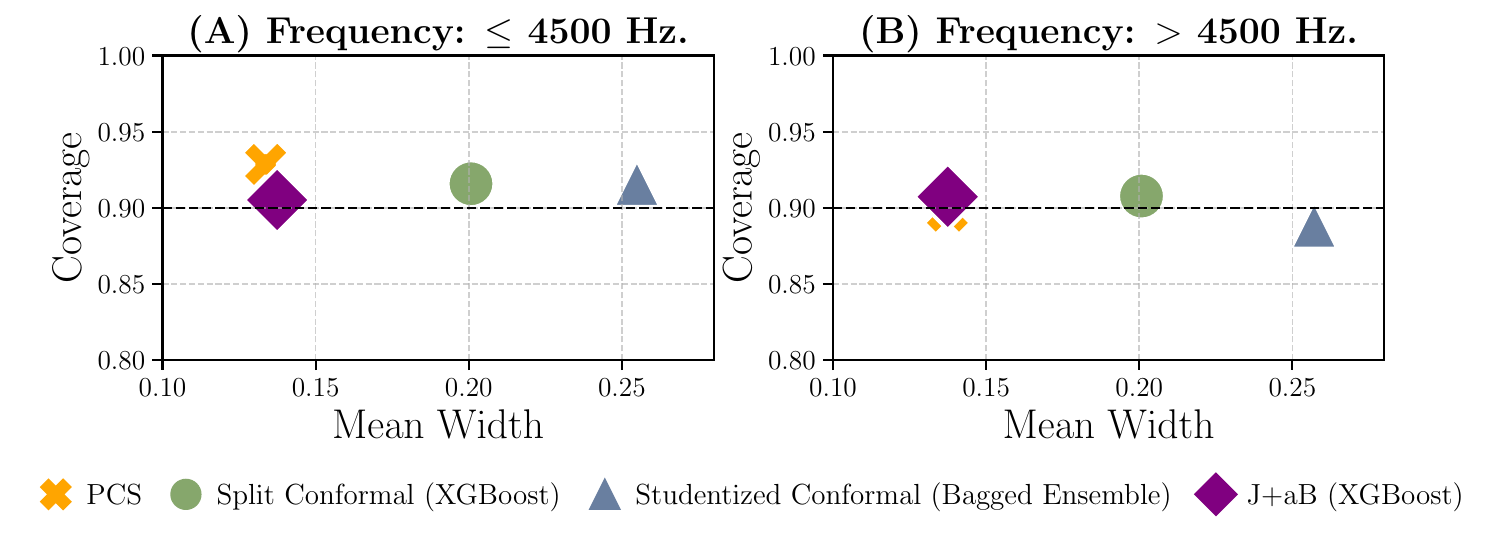}
    \caption{Coverage and width for PCS, and conformal regression approaches on subgroups in the Airfoil dataset \cite{brooks1989airfoil}. Panels (A) and (B) demonstrate performance on subgroups formed by frequency of the sound of the airfoil. PCS adapts its width to maintain coverage, while other methods fail to achieve desired coverage or produce larger intervals.}
    \label{fig:regression_subgroup_airfoil}
\end{figure}

\paragraph{Concrete \cite{yeh1998concrete}} The task is to predict concrete compressive strength.
We form subgroups based on the age of the concrete block.
PCS adapts its width to maintain coverage across subgroups. 
Split Conformal (XGBoost), Studentized Conformal (Bagged Ensemble), and J+aB (XGBoost) slightly under-cover in the small-age subgroup and produce larger intervals in the large-age group.
\begin{figure}[H]
    \centering
    \includegraphics[width=0.8\textwidth]{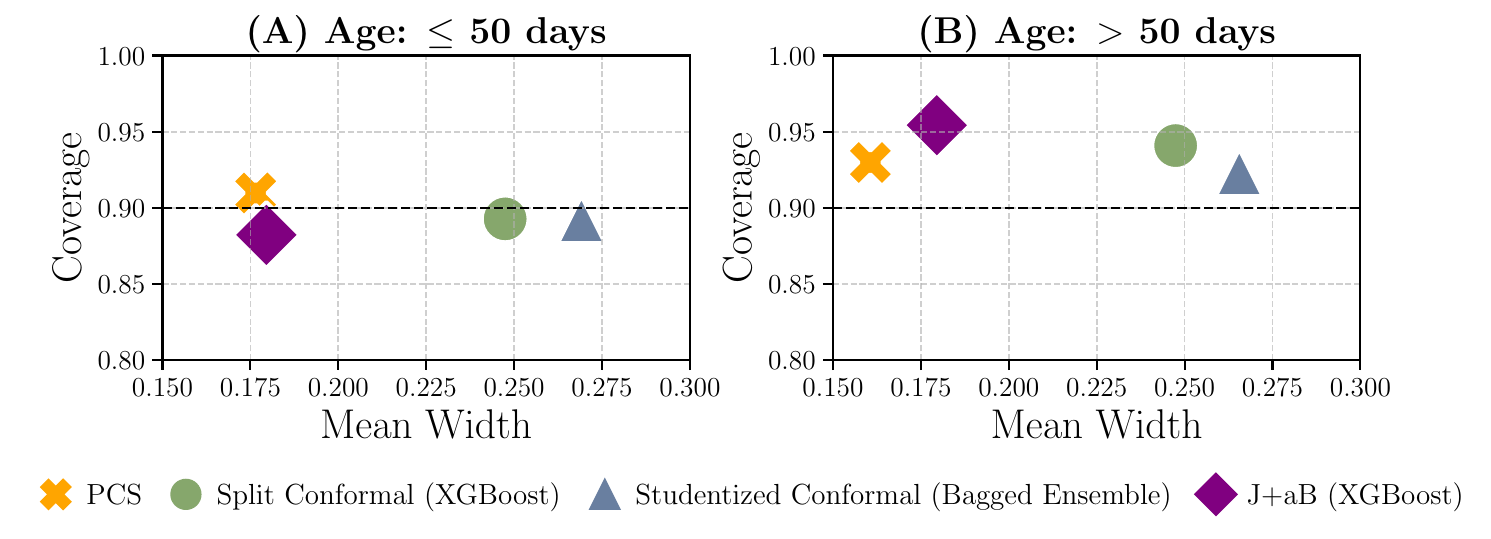}
    \caption{Coverage and width for PCS, and conformal regression approaches on subgroups in the Concrete dataset \cite{yeh1998concrete}. Panels (A) and (B) demonstrate performance on subgroups formed by age of the concrete block. PCS adapts its width to maintain coverage, while other methods fail to achieve desired coverage or produce larger intervals.}
    \label{fig:regression_subgroup_concrete}
\end{figure}

\paragraph{California Housing \cite{kelley1997sparse}} The task is to predict median housing price in Census block groups.
We form subgroups based on the median income of the block group.
PCS adapts its width to maintain coverage across subgroups. 
All conformal methods under-cover in the large-income subgroup.

\begin{figure}[H]
    \centering
    \includegraphics[width=0.8\textwidth]{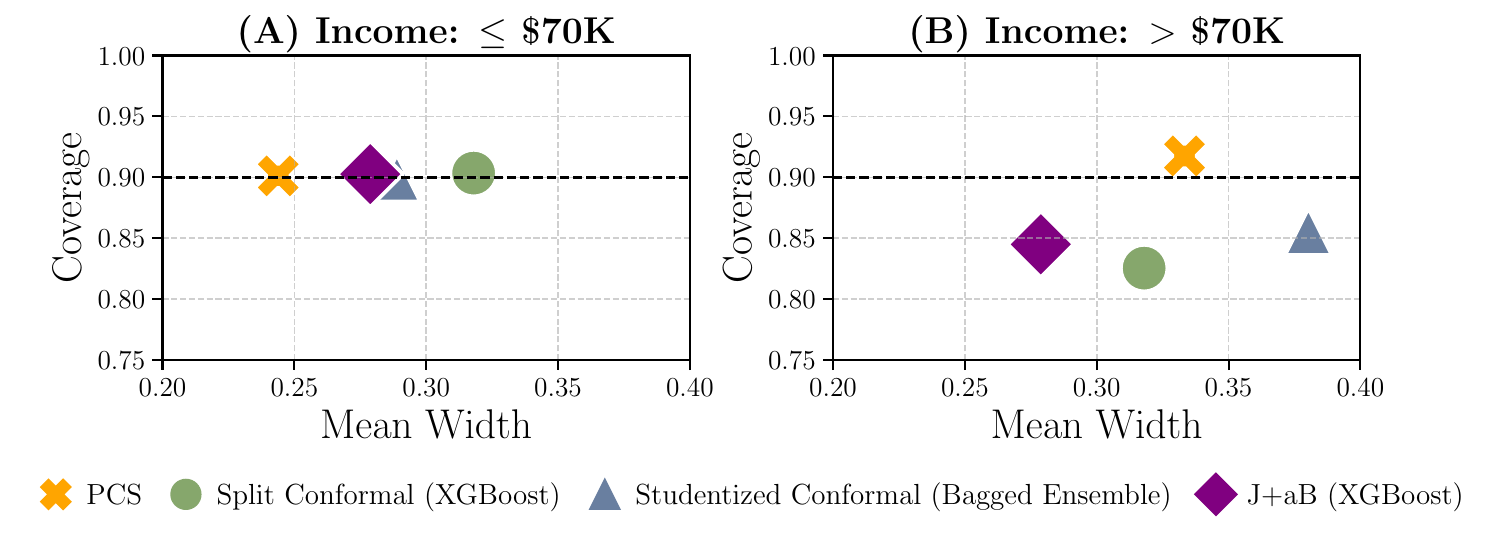}
    \caption{Coverage and width for PCS, and conformal regression approaches on subgroups in the CA Housing dataset \cite{kelley1997sparse}. Panels (A) and (B) demonstrate performance on subgroups formed by median income of the block group. PCS adapts its width to maintain coverage, while other methods fail to achieve desired coverage or produce larger intervals.}
    \label{fig:regression_subgroup_ca}
\end{figure}

\paragraph{Powerplant \cite{tfekci2014combined}} The task is to predict electrical energy output of powerplants.
We form subgroups based on the ambient temperature of the powerplant.
PCS slightly under-covers in the high-temperature subgroup.
Split Conformal (XGBoost), Studentized Conformal (Bagged Ensemble), and J+aB (XGBoost) maintain coverage across subgroups.

\begin{figure}[H]
    \centering
    \includegraphics[width=0.8\textwidth]{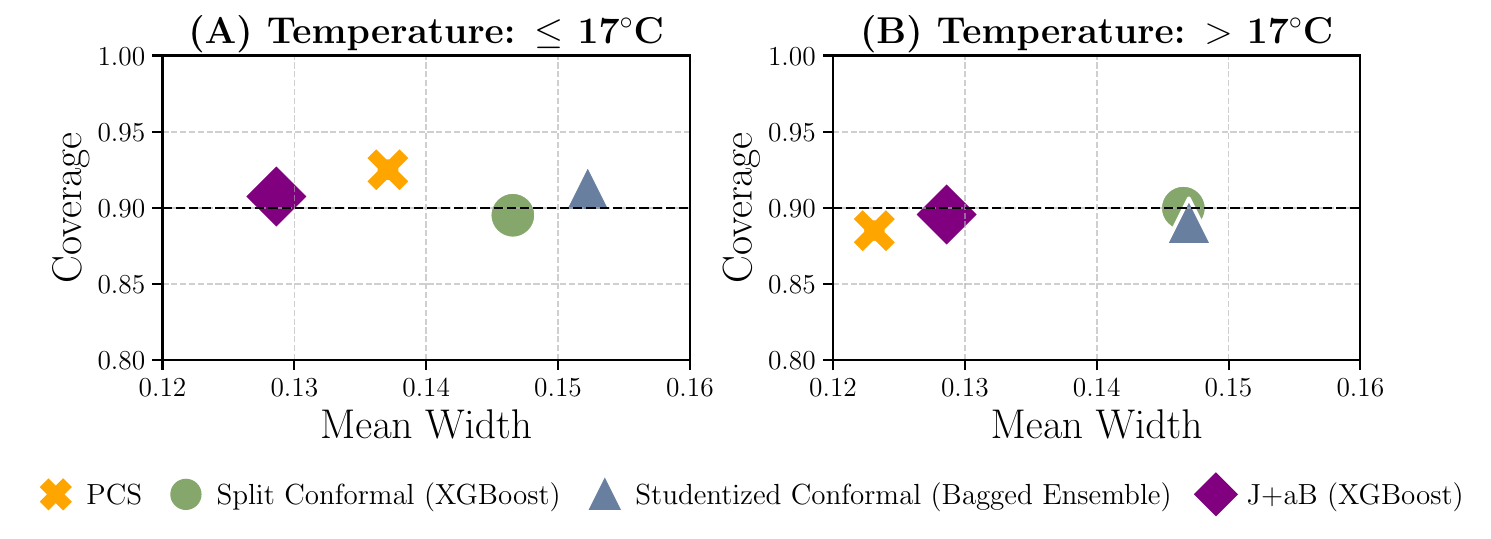}
    \caption{Coverage and width for PCS, and conformal regression approaches on subgroups in the Powerplant dataset \cite{tfekci2014combined}. Panels (A) and (B) demonstrate performance on subgroups formed by ambient temperature of the powerplant. PCS slightly under-covers in the high-temperature subgroup. J+aB (XGBoost) maintains coverage. Other methods produce larger intervals.}
    \label{fig:regression_subgroup_powerplant}
\end{figure}

\subsection{Comparison to PCS Ch.13 of Yu and Barter (2024)} 
\label{supp:reg_comparison_pcs}
We report coverage and mean width of PCS and PCS Ch.13 across our 17 datasets. 
Both methods achieve desired coverage in all datasets.
PCS produces interval widths equal to or smaller than those of PCS Ch.13.
\begin{table}[H]
    \centering
    \begin{tabular}{lrrrr}
\toprule
Method & \multicolumn{2}{c}{PCS} & \multicolumn{2}{c}{PCS Ch.13} \\
 & Coverage & Mean Width & Coverage & Mean Width \\
\midrule
Naval          & 0.905 & \textbf{0.001} & 0.904 & 0.001 \\
Energy         & 0.920 & \textbf{0.030} & 0.918 & 0.035 \\
Computer       & 0.902 & \textbf{0.066} & 0.897 & 0.067 \\
Miami          & 0.903 & \textbf{0.072} & 0.900 & 0.072 \\
Diamond        & 0.902 & \textbf{0.081} & 0.899 & 0.082 \\
Insurance      & 0.910 & \textbf{0.173} & 0.912 & 0.179 \\
Elevator       & 0.904 & \textbf{0.118} & 0.900 & 0.118 \\
Airfoil        & 0.917 & \textbf{0.133} & 0.907 & 0.159 \\
Powerplant     & 0.902 & \textbf{0.129} & 0.894 & 0.132 \\
Concrete       & 0.918 & \textbf{0.171} & 0.917 & 0.209 \\
Debutanizer    & 0.906 & \textbf{0.187} & 0.901 & 0.206 \\
Superconductor & 0.903 & \textbf{0.167} & 0.901 & 0.174 \\
Parkinsons     & 0.905 & \textbf{0.201} & 0.898 & 0.206 \\
CA             & 0.904 & \textbf{0.248} & 0.900 & 0.252 \\
Kin8nm         & 0.905 & \textbf{0.256} & 0.902 & 0.267 \\
QSAR           & 0.903 & \textbf{0.302} & 0.895 & 0.311 \\
Protein        & 0.902 & \textbf{0.491} & 0.899 & 0.502 \\
\bottomrule
\end{tabular}

    \caption{Coverage and mean width of PCS and PCS Ch.13 across our 17 datasets. Both methods achieve desired coverage, while PCS produces equal or smaller interval width than PCS Ch.13.}
    \label{tab:pcs-vs-pcs13}
\end{table}

\subsection{Different Train-Calibration Splits}
\label{supp:reg_different_split}
As suggested in \cite{lei2018distribution}, it may be beneficial for conformal methods to use a higher proportion of the training data to train the predictive algorithm than to calibrate the prediction intervals.
We thus repeat the experiment in \cref{sec:PCS_reg} using 75\% of the training data to train the predictive algorithm, and the other 25\% for calibration.
PCS achieves a smaller but still significant improvement over split and Studentized conformal in most datasets (\cref{fig:regression_7525_width_comparison}), as compared to \cref{fig:regression_width_comparison}.

\begin{figure*}[htbp]
    \centering
    \includegraphics[width=0.9\textwidth]{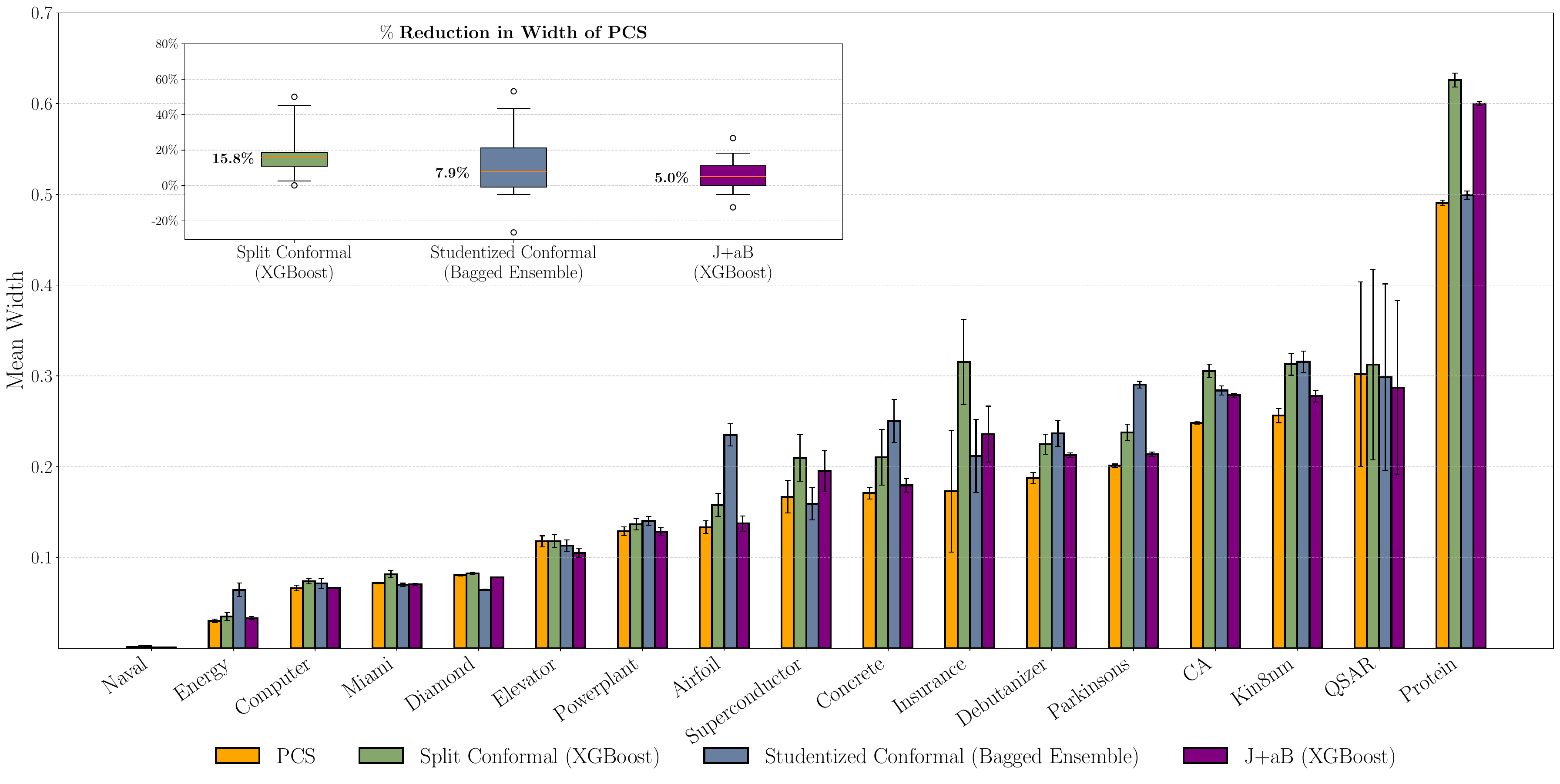}
    \caption{Comparison of PCS against Split conformal (XGBoost), Studentized conformal (Bagged Ensemble), and J+aB (XGBoost) across 17 datasets. We use a 75/25 train-calibration split for conformal methods. PCS-UQ achieves a smaller yet still significant reduction of interval width on average.}
    \label{fig:regression_7525_width_comparison}
\end{figure*}

\subsection{Computation Time Analysis}
One potential concern of using a bootstrap-based method is the time it takes to train the bootstrap models, especially for large training sets.
Both PCS-UQ and J+aB rely on such bootstrap procedures.
We thus conduct an analysis on computation time.
In this experiment, we train PCS-UQ and J+aB on the 17 regression datasets using XGBoost.
Note that this means we use a single fixed predictive algorithm for PCS-UQ and do not conduct prediction-check.
Additionally, we modify the bootstrap step in PCS-UQ to improve computational efficiency.
In the PCS framework \cite{yu2024veridical}, the purpose of the bootstrap is to perturb the dataset, thus creating pseudo-universes represented by the perturbed datasets.
Therefore, any reasonable method of perturbation is allowed in this framework.
We thus replace the bootstrap step in PCS-UQ with a half-subsampling procedure that samples half of the training set without replacement.

We present the results in \cref{tab:pcs-downsample}.
Using half subsampling instead of bootstrapping significantly reduces training time.
As subsampling is also allowed in the J+aB framework, using this sampling scheme can reduce the training time for both PCS-UQ and J+aB.

Additionally, we observe some long inference time for J+aB for two large datasets ($n>40000$).
This is likely due to J+aB performing computation over each training point:
To predict on a new data point $\mathbf{X}$, J+aB iterates through each training point $i$ and computes the aggregate (e.g. average) prediction on $\mathbf{X}$ using bootstrap models that are not trained using point $i$.
The appropriate quantiles of these $n_{\text{train}}$ aggregate predictions (plus/minus the corresponding residuals) are then used to form the prediction interval.
During computation, this procedure should not be time-consuming for typical tabular data, but we indeed observe longer prediction times for larger datasets.

\begin{table}[H]
    \centering

\begin{tabular}{lccccccc}
\toprule
& & \multicolumn{3}{c}{\textbf{Train Time (s)}} & \multicolumn{3}{c}{\textbf{Prediction Time (s)}} \\
\cmidrule(lr){3-5} \cmidrule(lr){6-8}
\textbf{Dataset} & \textbf{n} & \textbf{PCS-UQ} & \textbf{PCS-UQ} & \textbf{J+aB} & \textbf{PCS-UQ} & \textbf{PCS-UQ} & \textbf{J+aB} \\
& & & \textbf{(half subsample)} & & & \textbf{(half subsample)} & \\
\midrule
Energy              & 768    & 63  & 53  & 49  & 1  & 1  & 1  \\
Concrete            & 1030  & 79  & 72  & 72  & 1  & 1  & 1  \\
Insurance           & 1338  & 59  & 52  & 52  & 1  & 1  & 1  \\
Airfoil             & 1503  & 52  & 48  & 46  & 1  & 1  & 1  \\
Debutanizer         & 2394  & 110 & 103 & 102 & 1  & 1  & 1  \\
QSAR                & 5742  & 281 & 206 & 275 & 3  & 3  & 3  \\
Parkinsons          & 5875  & 209 & 211 & 206 & 2  & 2  & 2  \\
Kin8nm              & 8192  & 162 & 149 & 153 & 2  & 2  & 3  \\
Computer            & 8192  & 232 & 205 & 206 & 2  & 2  & 2  \\
Powerplant          & 9568  & 101 & 97  & 94  & 2  & 2  & 3  \\
Naval               & 11934 & 120 & 89  & 107 & 2  & 2  & 4  \\
Miami               & 13932 & 248 & 219 & 218 & 3  & 3  & 5  \\
Elevator            & 16599 & 165 & 130 & 146 & 4  & 4  & 6  \\
CA Housing          & 20640 & 176 & 156 & 153 & 4  & 4  & 7  \\
Superconductor      & 21263 & 973 & 885 & 952 & 5  & 5  & 9  \\
Protein             & 45730 & 240 & 203 & 223 & 8  & 9  & 28 \\
Diamond             & 53940 & 306 & 228 & 246 & 12 & 11 & 57 \\
\bottomrule
\end{tabular}
    \caption{Train and prediction times of PCS-UQ (bootstrap and half subsampling) and J+aB across our 17 datasets. All methods use XGBoost as the base algorithm.}
    \label{tab:pcs-downsample}
\end{table}
\section{Overview of Uncertainty Quantification Methods for Multi-Class Classification}
\label{sec:overview_classification}

\paragraph{Additional Notation} Because many conformal methods rely on ranks of predicted class probabilities, we introduce the following notation. 
For numbers $a_1, a_2, \dots, a_n$, let $\pi$ be the permutation of the indices that sorts the numbers in descending order. That is, $a_{\pi(1)} > a_{\pi(2)} > \dots > a_{\pi(n)}$. 
We assume that $\pi$ arbitrarily breaks ties.
For classification algorithm $\hat{f}(\cdot; \mathcal{D})$, let $\hat{f}^{(c)}(\cdot; \mathcal{D})$ denote the predicted probability for class $c$.
Lastly, we adopt the convention that if a dataset consists of $C$ classes, the classes are labeled $1,2,\dots,C$.

\subsection{Top-K}
\label{subsec:top_k}

We describe the Top-K procedure from \cite{angelopoulos2021uncertainty}.

\paragraph{Step 1: Data-Splitting and Model Training} Randomly split $\mathcal{D}$ into a training set $\mathcal{D}_{\text{tr}}$, and validation set $\mathcal{D}_{\text{val}}$. Fit algorithm $f$ on training set to obtain fitted model $\hat{f}(\cdot; \mathcal{D}_{\text{tr}})$. 

\paragraph{Step 2: Calibration} For each $(\mathbf{X}_i, Y_i) \in \mathcal{D}_{\text{val}}$, make prediction using $\hat{f}(\cdot; \mathcal{D}_{\text{tr}})$ and obtain conformal score $S_i = j$, where $Y_i = \pi(j)$ and $\hat{f}^{(\pi(1))}(\mathbf{X}_i; \mathcal{D}_{\text{tr}}) > \hat{f}^{(\pi(2))}(\mathbf{X}_i; \mathcal{D}_{\text{tr}}) > \dots > \hat{f}^{(\pi(C))}(\mathbf{X}_i; \mathcal{D}_{\text{tr}})$.
That is, $S_i$ is the rank of the predicted probability for the true class.
Let $q$ be the $1 - \alpha$ quantile of the set $\{S_i: i \in [|\mathcal{D}_{\text{val}}|]\}$.

\paragraph{Step 3: Generate Top-K Prediction Set} For a new test point $\mathbf{X}$, produce the prediction set
\begin{equation*}
    \mathcal{S} = \{\pi(1), \pi(2), \dots, \pi(q)\}
\end{equation*}
where $\hat{f}^{(\pi(1))}(\mathbf{X}; \mathcal{D}_{\text{tr}}) > \hat{f}^{(\pi(2))}(\mathbf{X}; \mathcal{D}_{\text{tr}}) > \dots > \hat{f}^{(\pi(C))}(\mathbf{X}; \mathcal{D}_{\text{tr}})$.

\subsection{Adaptive Prediction Sets}
\label{subsec:aps}

We describe the Adaptive Prediction Sets (APS) procedure from \cite{romano2020classification}.

\paragraph{Step 1: Data-Splitting and Model Training} Randomly split $\mathcal{D}$ into a training set $\mathcal{D}_{\text{tr}}$, and validation set $\mathcal{D}_{\text{val}}$. Fit algorithm $f$ on training set to obtain fitted model $\hat{f}(\cdot; \mathcal{D}_{\text{tr}})$. 

\paragraph{Step 2: Calibration} For each $(\mathbf{X}_i, Y_i) \in \mathcal{D}_{\text{val}}$, predict using $\hat{f}(\cdot; \mathcal{D}_{\text{tr}})$ and obtain conformal score $S_i = \sum_{j=1}^t \hat{f}^{(\pi(j))}(\mathbf{X}_i; \mathcal{D}_{\text{tr}})$, where $\pi(t) = Y_i$.
Let $q$ be the $1 - \alpha$ quantile of the set $\{S_i: i \in [|\mathcal{D}_{\text{val}}|]\}$.

\paragraph{Step 3: Generate APS Prediction Set} For a new test point $\mathbf{X}$, produce the prediction set
\begin{equation*}
    \mathcal{S} = \{\pi(1), \pi(2), \dots, \pi(v)\},\,\,\, \text{where } v = \min \bigg\{t: \sum_{j=1}^t \hat{f}^{(\pi(j))}(\mathbf{X}; \mathcal{D}_{\text{tr}}) \geq q \bigg\}
\end{equation*}

\subsection{Regularized Adaptive Prediction Sets}
\label{subsec:raps}

We describe the Regularized Adaptive Prediction Sets (RAPS) procedure from \cite{angelopoulos2021uncertainty}.

\paragraph{Step 1: Data-Splitting and Model Training} Randomly split $\mathcal{D}$ into a training set $\mathcal{D}_{\text{tr}}$, a tuning set $\mathcal{D}_{\text{tune}}$, and validation set $\mathcal{D}_{\text{val}}$. Fit algorithm $f$ on training set to obtain fitted model $\hat{f}(\cdot; \mathcal{D}_{\text{tr}})$. 

\paragraph{Step 2: Hyperparameter Tuning} The procedure has 2 hyperparameters: $t_{\text{reg}}$ and $\lambda$.
To tune $t_{\text{reg}}$, we perform Step 2 of the Top-K procedure (\ref{subsec:top_k}) on $\mathcal{D}_{\text{tune}}$ and set $t_{\text{reg}} = q$.
To tune $\lambda$, we perform a grid search with candidate values:
using the tuned $t_{\text{reg}}$ and a candidate $\lambda$, we proceed to Steps 3 and 4 using $\mathcal{D}_{\text{tune}}$.
We pick the $\lambda$ that produces the smallest average prediction set size.

\paragraph{Step 3: Calibration} For each $(\mathbf{X}_i, Y_i) \in \mathcal{D}_{\text{val}}$, predict using $\hat{f}(\cdot; \mathcal{D}_{\text{tr}})$ and obtain conformal score
\begin{equation*}
    S_i = \sum_{j=1}^t \hat{f}^{(\pi(j))}(\mathbf{X}_i; \mathcal{D}_{\text{tr}}) + \lambda (t - t_{\text{reg}})^+
\end{equation*}
where $\pi(t) = Y_i$ and $(\cdot)^{+}$ denotes the positive part. 
Let $q$ be the $1 - \alpha$ quantile of the set $\{S_i: i \in [|\mathcal{D}_{\text{val}}|]\}$.

\paragraph{Step 4: Generate RAPS Prediction Set} For a new test point $\mathbf{X}$, produce the prediction set
\begin{equation*}
    \mathcal{S} = \{\pi(1), \pi(2), \dots, \pi(v)\},\,\,\, \text{where } v = \min \bigg\{t: \sum_{j=1}^t \hat{f}^{(\pi(j))}(\mathbf{X}; \mathcal{D}_{\text{tr}}) + \lambda (t - t_{\text{reg}})^+ \geq q \bigg\}
\end{equation*}

\section{Additional Classification Results}

We report coverage for the best-performing method (as measured by average width) across our $6$ tabular classification datasets. 
All methods achieve desired coverage.

\begin{table}[H]
    \centering

\begin{tabular}{lrrrr}
\toprule
Method & RAPS & Top-k & APS & PCS \\
\midrule
Dataset &  &  &  &  \\
\midrule
Chess       & 0.901  & 0.912  & 0.901  & 0.900 \\
Cover Type  & 0.904  & 0.906  & 0.900  & 0.899 \\
Dionis      & 0.943  & 0.904  & 0.909  & 0.901 \\
Isolet      & 0.983  & 0.928  & 0.938  & 0.965 \\
Language    & 0.902  & 0.913  & 0.910  & 0.907 \\
Yeast       & 0.896  & 0.920  & 0.901  & 0.901 \\
\bottomrule
\end{tabular}
    \caption{Coverage for PCS, and best-performing conformal methods for our multi-class classification experiments. }
    \label{tab:coverage_classification}
\end{table}

\subsection{Description of Classification Datasets} We describe the context of the datasets used in our classification experiments.

\paragraph{Language \cite{collins2003collins}} The dataset contains quantitative measurements of bodies of literature in English. Example features include frequency of first-person usage and frequency of past-tense usage. The goal is to predict the genre of the text, out of 30 potential genres such as fiction, memoir, and poetry.

\paragraph{Yeast \cite{horton1996probabilistic}} The dataset contains measurements on amino acid sequences of yeast proteins. Example features include discriminant analysis output of amino acid sequences, and  nuclear localization consensus patterns. The goal is to predict the type of the yeast protein, out of 10 potential types such as cytoskeletal, nuclear and mitochondrial.

\paragraph{Isolet \cite{cole1991isolet}} The dataset contains characteristics of voice in recordings that each contain a single English letter. Example features include spectral coefficients, contour features, and sonorant features. The goal is to predict the letter pronounced, out of the 26 English letters.

\paragraph{Cover Type \cite{blackard1998covertype}} The dataset contains cartographic variables of regions in the Roosevelt National Forest. Each observation represents a 30 by 30 meter cell. Example features include elevation, slope, and distance to nearest road. The goal is to predict the cover-soil type, out of the 100 potential types such as spruce-sand and pine-clay.

\paragraph{Chess \cite{alcalafdez2011keel}} The dataset contains positions of both kings and the white rook in chess endgames. Example features include the row and column of the three pieces. The goal is to predict the number of optimal moves until white wins the game; if the number of moves is more than 16, the game ends in a draw. Hence the potential classes are $0,\dots,16$ and ``draw''.

\paragraph{Dionis \cite{guyon2019analysis}} The dataset is anonymized from a machine learning challenge. The dataset has 60 numerical features and 355 classes. No further context is available.
\section{Additional Deep-Learning Results}
\label{supp:dl}

We report coverage for the UQ methods across our 3 deep learning datasets. All methods achieve desired coverage.

\begin{table}[h]
\centering
\small
\setlength{\tabcolsep}{5pt}
\renewcommand{\arraystretch}{1.2}
\begin{tabular}{llccc}
\toprule
& \textbf{Method} & \textbf{CIFAR-100} & \textbf{Caltech Birds} & \textbf{ImageNet-Small} \\
\midrule
& \textcolor{blue}{APS} & 0.906 & 0.914 & 0.906 \\
& \textcolor{cyan}{RAPS} & 0.908 & 0.911 & 0.905 \\
& TopK & 0.923 & 0.916 & 0.923 \\
\midrule
\multirow{3}{*}{\rotatebox[origin=c]{90}{\textcolor{orange}{PCS}}}
& Original & 0.905 & 0.924 & 0.904 \\
& Dropout & 0.913 & 0.921 & 0.906 \\
& Noise & 0.911 & 0.919 & 0.905 \\
\bottomrule
\end{tabular}
\caption{Coverage across multiple deep learning datasets. All methods achieve target coverage.}
\label{tab:pcs_dl_coverage}
\end{table}

\subsection{Description of Datasets}

\paragraph{CIFAR-100 \cite{krizhevsky2009learning}} This dataset consists of $60000$ $32\times32$ natural-colored images in 100 classes, each containing 600 images per class. There are 50000 training images and 10000 test images.

\paragraph{ImageNet-Small \cite{imagenet_cvpr09}} ImageNet-Small contains 100000 natural images of 200 classes (500 for each class) downsized to $64\times64$ colored images. Each class has 500 training images, 50 validation images and 50 test images.

\paragraph{Caltech-UCSD Birds \cite{welinder2010caltech}} This is an image dataset with photos of 200 mostly North American bird species.

\section{Theoretical Results}
\label{supp:theory}
We describe the modified PCS procedure for regression, and then establish our formal theoretical results as follows.

\subsection{Modified PCS Procedure}
\label{supp:modified_pcs}
We use the notation established in \cref{sec:PCS_reg}.
\begin{enumerate}[leftmargin=*]
    \item Split $\mathcal{D}$ into $\mathcal{D}_{\text{tr}}$,  $\mathcal{D}_{\text{val}}$, and $\mathcal{D}_{\text{cal}}$, and conduct the prediction check as described in step 1 of \cref{sec:PCS_reg} on $\mathcal{D}_{\text{val}}$ to obtain screened algorithms $f_{1}, \ldots f_{k}$.
    \item Bootstrap the \emph{training} dataset $B$ times to obtain bootstrapped samples $\mathcal{D}_{\text{tr}}^{(1)} \ldots \mathcal{D}_{\text{tr}}^{(B)}$. Fit all algorithms chosen in the previous step on every bootstrapped dataset $\mathcal{D}_{\text{tr}}^{(b)}$ to obtain bootstrapped models $\{\hat{f}_j(; \mathcal{D}_{tr}^{(b)}), j \in [k], b \in [B]\}$. 
    
    \vspace{2mm}
    To define the calibration procedure, we introduce some necessary notation.
    For a given point $\mathbf{X}$, form a prediction set $\mathcal{P} = \{\hat{f}_j(\mathbf{X}; \mathcal{D}_{\text{tr}}^{(b)}); j \in [k], b \in B\}$.
    Further, let $q_{\beta}(S)$ be the $\beta$ quantile for a set $S$ and let
    \[
    l_{\alpha}(\mathbf{X}) := q_{\alpha/2}(\mathcal{P}),\quad u_\alpha(\mathbf{X}) := q_{1-\alpha/2}(\mathcal{P}),\quad m(\mathbf{X}) := q_{0.5}(\mathcal{P}).
    \]
     Next, we define score function for $\mathbf{X}$ as follows
    \begin{equation}
    \label{eq:PCS_modified_calib}
    S(\mathbf{X},Y) = \min\left\{\gamma: Y \in \left[
    m(\mathbf{X})-\gamma \times (m(\mathbf{X}) - l_{\alpha}(\mathbf{X})) , 
    m(\mathbf{X})+\gamma \times (u_\alpha(\mathbf{X}) - m(\mathbf{X}))\right]\right\}.
    \end{equation}
    Moreover, let $S_{\text{cal}} = \{S(\mathbf{X}_i,Y_i): i \in D_{\text{cal}}\}$.
    Let $\hat{\gamma}_{\alpha}$ be the $\lceil (1-\alpha) (|D_{cal}|+1) \rceil$-th smallest element of $S_{\text{cal}}$
    \vspace{2mm}
    
    \item For a test point $\mathbf{X}_*$, define the PCS prediction set as 
    \begin{equation}
\label{eq:modified_pcs_interval}
    \hat{C}_{\text{PCS}}(\mathbf{X}_*) = \left[m(\mathbf{X}_*)-\hat{\gamma}_{\alpha} \times (m(\mathbf{X}_*) - l_{\alpha}(\mathbf{X}_*)) , 
    m(\mathbf{X}_*)+\hat{\gamma}_{\alpha} \times (u_\alpha(\mathbf{X}_*) - m(\mathbf{X}_*))\right],
\end{equation}
   
\end{enumerate}

Next, we provide our formal theoretical result and its proof. 

\begin{theorem} For a test point ($\mathbf{X}_*,Y)$, assume $\mathcal{D_\textbf{cal}} \cup (\mathbf{X}_*,Y)$ is exchangeable. 
For given $\alpha \in (0,1)$, the PCS prediction interval \eqref{eq:modified_pcs_interval} satisfies 
\begin{equation*}
    \mathbb{P}_{(\mathbf{X}_i, Y_i) \in \mathcal{D}_{\text{tr}} ~\cup \mathcal{D}_{\text{val}}}\left(Y \in \hat{C}_{\text{PCS}}(\mathbf{X}_{*})\right) \geq 1- \alpha  
\end{equation*}

\label{thm:pcs_coverage}

\end{theorem}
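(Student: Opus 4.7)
The plan is to recast the modified PCS procedure as a standard split conformal inference procedure with a particular (complicated) score function, and then invoke the classical coverage guarantee of split conformal. The key observation is that conditional on $\mathcal{D}_{\text{tr}} \cup \mathcal{D}_{\text{val}}$, every ingredient of the score $S(\mathbf{X},Y)$ defined in equation~\eqref{eq:PCS_modified_calib} is a \emph{fixed} function of $(\mathbf{X},Y)$: the screened algorithms $f_1,\ldots,f_k$, the bootstrapped models $\{\hat{f}_j(\cdot;\mathcal{D}_{\text{tr}}^{(b)})\}$, and hence the quantiles $q_{\alpha/2}(\mathcal{P})$, $q_{1-\alpha/2}(\mathcal{P})$, and the range $R_\alpha(\mathbf{X})$ are all deterministic. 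This is precisely why the calibration set is kept separate from the fitting data.

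First I would fix the conditioning event on $\mathcal{D}_{\text{tr}} \cup \mathcal{D}_{\text{val}}$ and treat $S:\mathbb{R}^d\times\mathbb{R}\to\mathbb{R}$ as a deterministic map. Next I would argue that because $\mathcal{D}\cup\{(\mathbf{X}_*,Y)\}$ is exchangeable and $\mathcal{D}_{\text{cal}}$ is obtained by a random split independent of the labels, the $|\mathcal{D}_{\text{cal}}|+1$ points $\{(\mathbf{X}_i,Y_i)\}_{i\in\mathcal{D}_{\text{cal}}}\cup\{(\mathbf{X}_*,Y)\}$ remain exchangeable conditional on $\mathcal{D}_{\text{tr}}\cup\mathcal{D}_{\text{val}}$. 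Applying the deterministic map $S$ preserves exchangeability, so the augmented score set $S_{\text{cal}}\cup\{S(\mathbf{X}_*,Y)\}$ is exchangeable.

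Then I would apply the standard conformal quantile lemma (see, e.g., Lemma 1 of Lei et al.~\cite{lei2018distribution}): for exchangeable real-valued random variables $Z_1,\ldots,Z_n,Z_*$, one has $\mathbb{P}(Z_*\leq \widehat{q})\geq 1-\alpha$ when $\widehat{q}$ is the $\lceil(1-\alpha)(n+1)\rceil$-smallest of $Z_1,\ldots,Z_n$. Setting $Z_i=S(\mathbf{X}_i,Y_i)$ and $Z_*=S(\mathbf{X}_*,Y)$ gives $\mathbb{P}\bigl(S(\mathbf{X}_*,Y)\leq\hat{\gamma}_\alpha\bigr)\geq 1-\alpha$.

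The last step is to verify that the event $\{S(\mathbf{X}_*,Y)\leq \hat{\gamma}_\alpha\}$ coincides with $\{Y\in\hat{C}_{\text{PCS}}(\mathbf{X}_*)\}$. By construction $S(\mathbf{X}_*,Y)$ is the smallest scaling $\gamma$ such that $Y$ lies in the symmetric interval of half-width $\gamma R_\alpha(\mathbf{X}_*)/2$ centered at $\bigl(q_{\alpha/2}(\mathcal{P})+q_{1-\alpha/2}(\mathcal{P})\bigr)/2$; hence $S(\mathbf{X}_*,Y)\leq\hat{\gamma}_\alpha$ exactly when $Y$ lies in the interval obtained by plugging $\hat{\gamma}_\alpha$ into that center-plus-radius form, which is $\hat{C}_{\text{PCS}}(\mathbf{X}_*)$ as defined in~\eqref{eq:modified_pcs_interval}. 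Combining this equivalence with the previous bound yields the claim.

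The main obstacle is largely bookkeeping rather than substance: one must carefully verify that exchangeability survives the random split into (training$\cup$validation) versus calibration and that the score function, although defined through a data-dependent pipeline, depends only on $(\mathbf{X},Y)$ once we condition on $\mathcal{D}_{\text{tr}}\cup\mathcal{D}_{\text{val}}$. A secondary subtlety is ensuring the ``min'' defining $S$ is well-defined (attained or finite) for all $(\mathbf{X},Y)$ with positive probability, which holds as long as $R_\alpha(\mathbf{X})>0$; otherwise one can define $S=\infty$ without affecting the argument, since the relevant quantile bound still applies.
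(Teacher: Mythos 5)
Your proposal is correct and takes essentially the same route as the paper: it rewrites the modified PCS interval as the level set $\hat{C}_{\text{PCS}}(\mathbf{X}_*)=\{y: S(\mathbf{X}_*,y)\leq \hat{\gamma}_{\alpha}\}$ of a score function that is fixed once one conditions on $\mathcal{D}_{\text{tr}}\cup\mathcal{D}_{\text{val}}$, and then applies the standard split-conformal coverage guarantee. The only difference is that you spell out the exchangeability-of-scores argument and the $\lceil(1-\alpha)(|\mathcal{D}_{\text{cal}}|+1)\rceil$ quantile lemma explicitly, whereas the paper delegates these details to the cited conformal inference literature.
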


\begin{proof} The proof follows from the fact that we can rewrite the modified PCS prediction set \eqref{eq:modified_pcs_interval} as follows
\begin{equation*}
    \hat{C}_{\text{PCS}}(\mathbf{X_*}) = \{y: S(\mathbf{X_*},y) \leq \hat{\gamma}_{\alpha}\}.
\end{equation*}
Given this observation, we use the result of \cite{shafer2007tutorialconformalprediction} that any prediction set with a prefitted score function defined with the form above is guaranteed to have $1-\alpha$ coverage.  
\end{proof}

\end{document}